\theoremstyle{plain}
\newtheorem{theorem}{Theorem}[section]
\newtheorem{lemma}[theorem]{Lemma}
\theoremstyle{definition}
\newtheorem{definition}[theorem]{Definition}
\theoremstyle{remark}
\newcommand{\methodname}{Token-level Direct Preference Optimization}
\newcommand{\methodabb}{TDPO}
\icmltitlerunning{\methodname{}}
\begin{document}

\twocolumn[
\icmltitle{\methodname{}}

% It is OKAY to include author information, even for blind
% submissions: the style file will automatically remove it for you
% unless you've provided the [accepted] option to the icml2024
% package.

% List of affiliations: The first argument should be a (short)
% identifier you will use later to specify author affiliations
% Academic affiliations should list Department, University, City, Region, Country
% Industry affiliations should list Company, City, Region, Country

% You can specify symbols, otherwise they are numbered in order.
% Ideally, you should not use this facility. Affiliations will be numbered
% in order of appearance and this is the preferred way.
\icmlsetsymbol{equal}{*}

\begin{icmlauthorlist}
\icmlauthor{Yongcheng Zeng}{CASIA,CAS}
\icmlauthor{Guoqing Liu}{msra}
\icmlauthor{Weiyu Ma}{CASIA,CAS}
\icmlauthor{Ning Yang}{CASIA}
\icmlauthor{Haifeng Zhang}{CASIA}
\icmlauthor{Jun Wang}{UCL}
%\icmlauthor{}{sch}
%\icmlauthor{}{sch}
\end{icmlauthorlist}

\icmlaffiliation{CASIA}{Institute of Automation, Chinese Academy of Sciences}
\icmlaffiliation{CAS}{School of Artificial Intelligence, University of Chinese Academy of Sciences}
\icmlaffiliation{msra}{Microsoft Research AI4Science}
\icmlaffiliation{UCL}{University College London}

\icmlcorrespondingauthor{Jun Wang}{jun.wang@cs.ucl.ac.uk}

\icmlcorrespondingauthor{Haifeng Zhang}{haifeng.zhang@ia.ac.cn}

% You may provide any keywords that you
% find helpful for describing your paper; these are used to populate
% the "keywords" metadata in the PDF but will not be shown in the document
\icmlkeywords{Machine Learning, ICML}

\vskip 0.3in
]

% this must go after the closing bracket ] following \twocolumn[ ...

% This command actually creates the footnote in the first column
% listing the affiliations and the copyright notice.
% The command takes one argument, which is text to display at the start of the footnote.
% The \icmlEqualContribution command is standard text for equal contribution.
% Remove it (just {}) if you do not need this facility.

\printAffiliationsAndNotice{}  % leave blank if no need to mention equal contribution
% \printAffiliationsAndNotice{\icmlEqualContribution} % otherwise use the standard text.

\begin{abstract}
Fine-tuning pre-trained Large Language Models (LLMs) is essential to align them with human values and intentions. This process often utilizes methods like pairwise comparisons and KL divergence against a reference LLM, focusing on the evaluation of full answers generated by the models. However, the generation of these responses occurs in a token level, following a sequential, auto-regressive fashion. In this paper, we introduce \methodname{} (\methodabb{}), a novel approach to align LLMs with human preferences by optimizing policy at the token level. Unlike previous methods, which face challenges in divergence efficiency, \methodabb{} incorporates forward KL divergence constraints for each token, improving alignment and diversity. Utilizing the Bradley-Terry model for a token-based reward system, TDPO enhances the regulation of KL divergence, while preserving simplicity without the need for explicit reward modeling. Experimental results across various text tasks demonstrate \methodabb{}'s superior performance in balancing alignment with generation diversity.
% offering a significant advancement in the field of LLM training methodologies. 
Notably, fine-tuning with \methodabb{} strikes a better balance than DPO in the controlled sentiment generation and single-turn dialogue datasets, and significantly improves the quality of generated responses compared to both DPO and PPO-based RLHF methods. Our code is open-sourced at \href{https://github.com/Vance0124/Token-level-Direct-Preference-Optimization}{https://github.com/Vance0124/Token-level-Direct-Preference-Optimization}.
\end{abstract}

% \vspace{-3mm}
\section{Introduction}
Large language models (LLMs) \cite{achiam2023gpt, bubeck2023sparks} have demonstrated significant generalization capabilities in various domains including text summarization \cite{stiennon2022learning, Koh_2022}, coding writing \cite{chen2021evaluating, gao2023pal}, and even following human instructions \cite{chung2022scaling, ouyang2022training}.
%Nevertheless, the misuse of LLMs raises significant concerns for human society, with potential threats including social manipulation \cite{hendrycks2023overview}, AI-enabled cyberattacks \cite{shevlane2023model}, and the creation of enhanced pathogens \cite{urbina2022dual}. Given the ongoing advancement of LLM capabilities, ensuring the safety alignment of LLMs with human values has become an increasingly urgent issue \cite{christian2023amazing}.
% However, precisely due to the immense capabilities of LLMs, 
% they also raise concerns about the security of artificial intelligence.
% % they have also posed safety concerns within the scope of artificial intelligence risks. 
% Examples include social manipulation \cite{hendrycks2023overview}, AI-enabled cyberattacks \cite{shevlane2023model}, and the generation of enhanced pathogens \cite{urbina2022dual}. The misuse of these powerful LLMs poses a significant threat to human society. Consequently, as the capabilities of LLMs continue to advance, ensuring the safety alignment of LLMs with human values has become an increasingly urgent issue \cite{christian2023amazing}.
In order to align LLMs with human intentions, Reinforcement Learning from Human Feedback (RLHF) \cite{christiano2017deep, ouyang2022training, dong2023raft, yuan2023rrhf, liu2023statistical} has emerged as a highly effective method, embodying both stylistic and ethical values \cite{bai2022training, ganguli2022red}. These approaches typically involve the training of a reward model followed by the fine-tuning of the policy model using reinforcement learning (RL).

\begin{figure*}[ht]
\vskip 0.2in
\centering
\subfigure[\label{demo_subfig1}]{\includegraphics[width=0.32\textwidth]{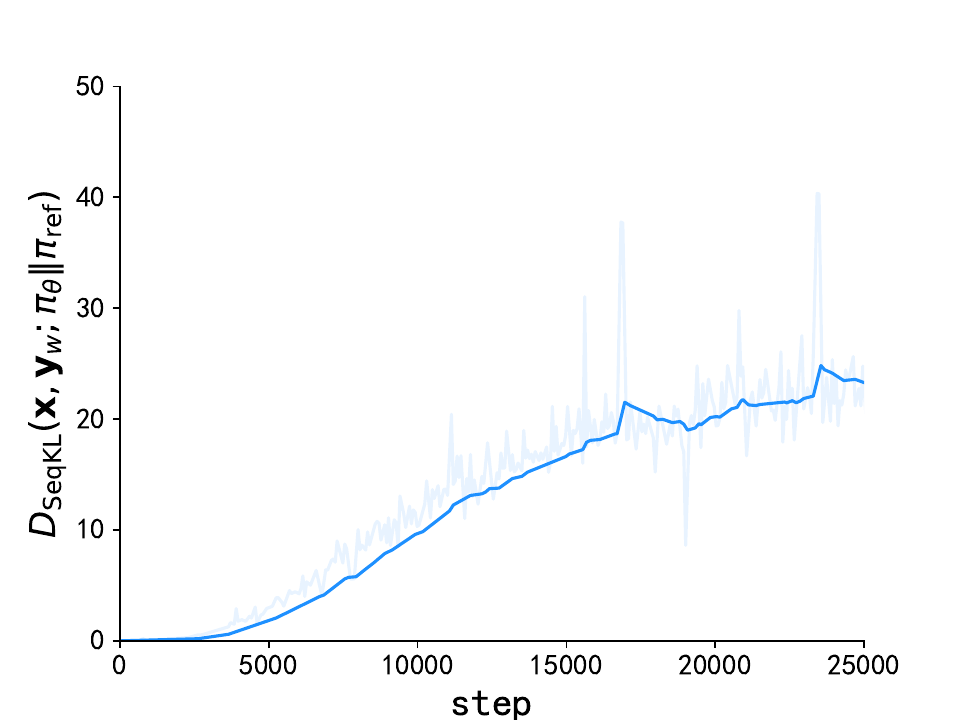}}
\subfigure[\label{demo_subfig2}]{\includegraphics[width=0.32\textwidth]{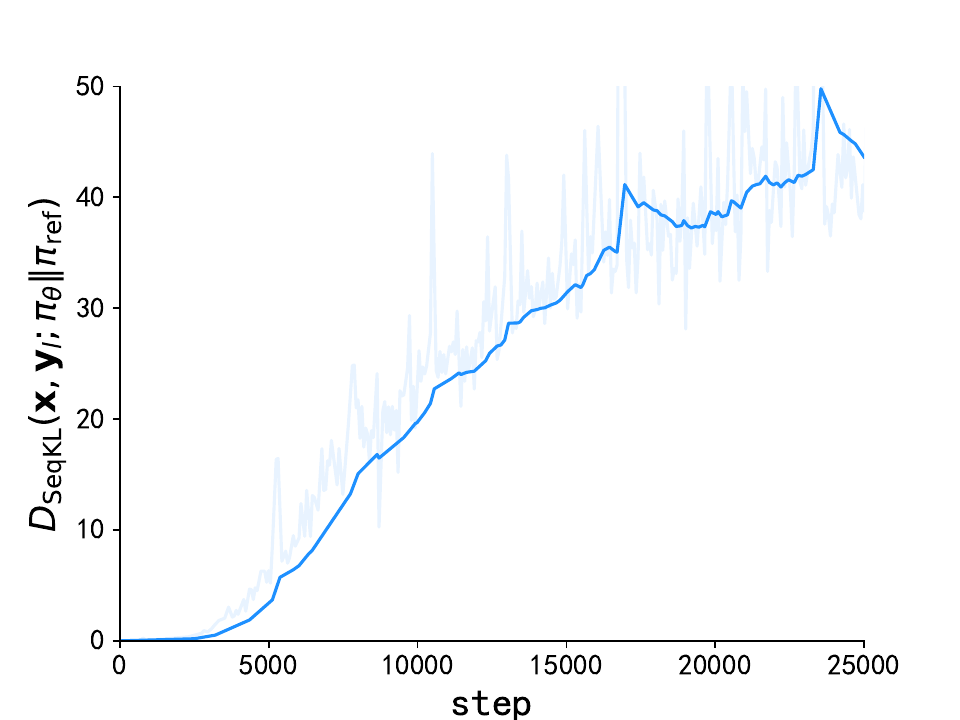}}
\subfigure[\label{demo_subfig3}]{\includegraphics[width=0.32\textwidth]{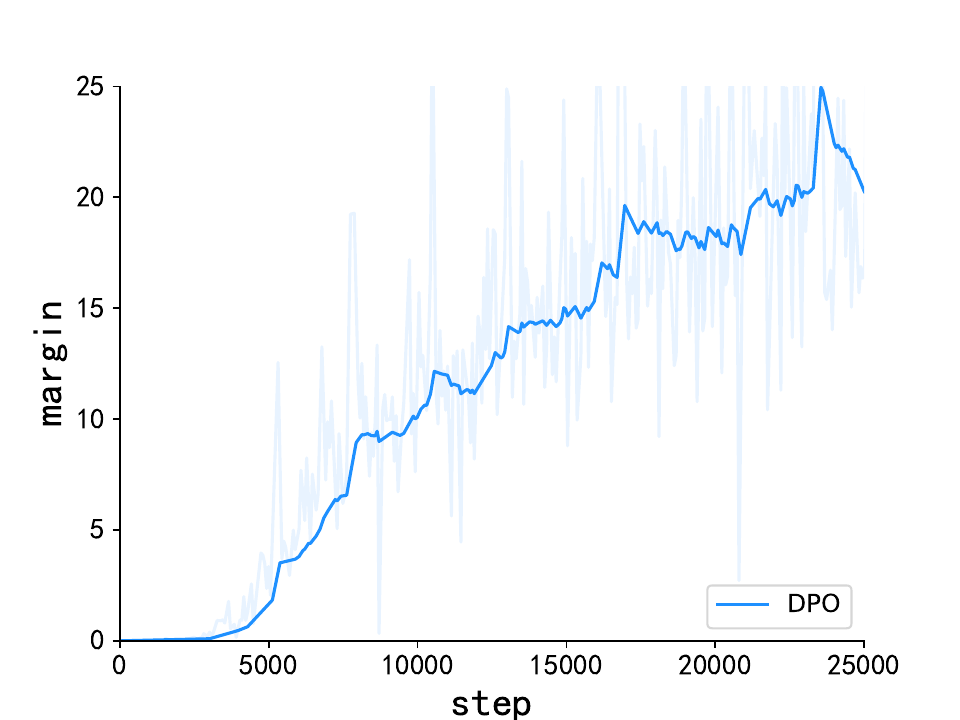}}
  \caption{Sequential KL (SeqKL) divergence of both preferred response and dispreferred responses on IMDb dataset. \cref{demo_subfig1} shows the progression of SeqKL divergence on the preferred responses over training steps. \cref{demo_subfig2} depicts the evolution of SeqKL divergence on the dispreferred responses over the training steps. \cref{demo_subfig3} illustrates the difference between the SeqKL divergence of the dispreferred responses and that of the preferred responses during the training process, namely $margin=|D_{\mathrm{SeqKL}}({x}, {y}_w;\pi_{\mathrm{ref}}\|\pi_{\theta}) - D_{\mathrm{SeqKL}}({x}, {y}_l;\pi_{\mathrm{ref}}\|\pi_{\theta})|$. The definition of SeqKL divergence refers to \cref{def1}.}
  \label{fig1}
\vskip -0.1in
\end{figure*}

Direct Preference Optimization (DPO) \cite{rafailov2023direct} introduces a straightforward and effective technique for training LLMs using pairwise comparisons, without the need for explicitly establishing a reward model.
% This approach bypasses the explicit learning of both the reward and policy models and accelerates the overall training process.
DPO utilizes KL divergence to ensure that the training process remains closely aligned with a reference Large Language Model (LLM), preventing significant deviations.
In DPO, KL divergence is assessed at the sentence level, reflecting the fact that evaluations are based on complete responses (answers), typically comprising several sentences. However, the generation of these responses occurs sequentially, following an auto-regressive approach. A potential benefit is to examine divergence in relation to a reference LLM on a more granular, token-by-token basis. One approach involves using sequential KL divergence (as defined in \cref{def1}), which monitors the trajectory of the generated responses.  As illustrated in \cref{fig1}, DPO demonstrates a significantly faster increase in KL divergence within the subset of less preferred responses when compared to the subset that is preferred. 
This results in an expanding gap between the two subsets and also indicates that DPO does not effectively control the KL divergence of the dispreferred response subset. This impacts the model's divergence efficiency and ultimately affects its linguistic capabilities and generative diversity. 
Such a limitation highlights the decreased effectiveness of employing KL divergence within the DPO framework, suggesting an area for improvement in its methodology.

%In DPO, the KL divergence is applied at the sentence level since the pairwise judgment is provided only for each generated answer, which usually encompasses multiple sentences. However, an answer is generated in a sequential, auto-regressive manner. Thus, it is of great intereste to investigate a divergence over a reference LLM in a sequential token-by-token manner. As depicted in \cref{fig1}, we find that DPO exhibits a notably accelerated growth rate of KL divergence in the dispreferred response subset compared to the preferred subset. This disparity forms a widening margin between these two sets. This phenomenon reduces the divergence efficiency of DPO, underscoring a limitation in DPO's ability to effectively regulate KL divergence.  As such it consequently reduce the efficiency of utilizing divergence in the DPO method. 

%potentially contributing to a swift decline in response diversity within LLMs trained using DPO.

The imbalance in the growth rates of the sequential KL divergence is potentially related to the reverse KL divergence constraint employed by DPO. 
% Moreover, DPO relies on a reverse KL divergence constraint to prevent a substantial shift of the fine-tuned policy model from the reference model.
The mode-seeking property of reverse KL divergence tends to induce diversity reduction during generation, limiting the model's potential to produce diverse and effective responses \cite{wiher2022decoding, khalifa2020distributional, glaese2022improving, perez2202red}. Built upon DPO, the f-DPO method \cite{wang2023beyond} studies the trade-off between alignment performance and generation diversity of LLMs under different divergence constraints. It highlights the advantages of the mass-covering behavior of forward KL divergence in enhancing model diversity and explores the impact of different divergence constraints. 
Nevertheless, 
% f-DPO does not provide a solution that attains both high alignment performance and generation diversity in the learned policy.
f-DPO only independently discusses the changes in model behavior under either the reverse KL divergence or the forward KL divergence constraints. Essentially, it does not fundamentally enhance the DPO algorithm itself but rather strikes a balance between alignment performance and generating diversity by simply swapping different KL divergence constraints.

Inspired by the aforementioned observations, we define and examine the problem of aligning with human preferences from a sequential and token-level standpoint. Some concurrent work has also been conducted in this direction \cite{rafailov2024r,zhong2024dpo}.  We introduce a new method, referred to as \methodname{} (\methodabb{}), which aims to strike a better balance between alignment performance and generation diversity by controlling the KL divergence for each token.
In order to achieve this, we redefine the objective of maximising restricted rewards in a sequential manner. The connection between sentence-level reward and token-level generation is established by the use of the Bellman equation. Afterwards, the Bradley-Terry model \cite{bradley1952rank} is converted into a representation at the token level, demonstrating its close relationship with the Regret Preference Model  \cite{knox2022models, knox2023learning}. By utilizing this method, we effectively integrate forward KL divergence restrictions for each token in the final objective function, resulting in improved regulation of KL divergence.

% To enhance the performansubjected toce of DPO and alleviate the imbalance of KL divergence, we delve into a token-level perspective. We initially establish the equivalence between the Bradley-Terry model \cite{bradley1952rank} and the Regret Preference Model \cite{knox2023learning, knox2022models} on fine-tuning LLMs. Subsequently, based on the Regret Preference Model, we explicitly introduce forward KL divergence constraints at each token in the final objective function through a clever transformation. Combined with the implicit constraints of reverse KL divergence in the objective function, our approach can better regulate the balance between alignment performance and generation diversity. We name our approach \methodname{} (\methodabb). In \cref{IMDb:subfig2,IMDb:subfig3,IMDb:subfig4}, we showcase the effectiveness of \methodabb{} in regulating KL divergence. It forms a better balance of KL divergence on the preferred and dispreferred response subsets,  resulting in an improved reward-KL frontier compared to the DPO. Similar to the DPO method, our approach is simple to implement and optimize the policy directly without explicitly learning a reward model or sampling from the policy during training. 

\methodabb{} maintains the simplicity of DPO while offering improved regulation of KL divergence for aligning LLMs with human preferences. Echoing the strategy of DPO, our method directly optimizes the policy without necessitating explicit reward model learning or policy sampling throughout the training phase. Our experimental results demonstrate the effectiveness of \methodabb{} across multiple text tasks, and gain a notable enhancement in the quality of generated responses in comparison to both DPO and PPO-based RLHF methods. In conclusion,
\methodabb{} stands out for its ability to not only effectively address the issue of excessive KL divergence but also greatly improve divergence efficiency.

% \vspace{-2mm}
\section{Related Works}
% Enhanced version of Related Works for LaTeX document
The emergence of ChatGPT has catalyzed significant advancements in the field of Large Language Models (LLMs), such as OpenAI's GPT-4 \cite{achiam2023gpt}, Mistral \cite{jiang2023mistral}, and Google's Gemini \cite{team2023gemini}. 
% These breakthroughs have pushed the boundaries of what LLMs can achieve, expanding their applications across various domains. 
Generally, the training of LLMs involves three stages: initial unsupervised pre-training on massive text corpora to grasp linguistic structures \cite{raffel2020exploring, brown2020language, workshop2022bloom, touvron2023llama}, followed by supervised fine-tuning with task-specific datasets to enhance the LLMs' probability of producing desired responses \cite{taori2023alpaca, chiang2023vicuna, vu2023koala}. However, due to the typically limited and expensive availability of labeled datasets during the supervised fine-tuning stage, the model may retain biases and inaccuracies, manifesting as societal biases \cite{sheng2021societal}, ethical concerns \cite{weidinger2021ethical}, toxicity \cite{rauh2022characteristics}, and hallucinations \cite{huang2023survey}, which necessitates a subsequent AI alignment phase. Noteworthy models achieving significant alignment, such as Zephyr \cite{tunstall2023zephyr} and GPT-4 \cite{achiam2023gpt}, have demonstrated the effectiveness of techniques like Reinforcement Learning from Human Feedback (RLHF) and Direct Preference Optimization (DPO) algorithms.

Reinforcement Learning from Human Feedback (RLHF) has emerged as a cornerstone in aligning LLMs with human values, providing a mechanism to refine model outputs based on qualitative feedback \cite{christiano2017deep, ouyang2022training, bai2022training, song2023preference, touvron2023llama}. This approach has shown considerable promise in making models more responsive to human expectations and ethical considerations by iteratively improving their performance through human-generated feedback. However, the complexity of implementing RLHF, compounded by the inaccuracies in human-generated reward models \cite{wu2023fine}, has prompted the exploration of alternative strategies. Methods like Reward Ranked FineTuning (RAFT) \cite{dong2023raft} and Rank Responses to align Human Feedback (RRHF) \cite{yuan2023rrhf} offer streamlined approaches to alignment, circumventing some of RLHF's inherent challenges. Particularly, Direct Preference Optimization (DPO) \cite{rafailov2023direct} represents a breakthrough in direct policy optimization, addressing the intricacies of balancing model behavior through a nuanced approach to reward function optimization. Nevertheless, the challenge of maintaining linguistic diversity while aligning with human preferences remains a pivotal concern, prompting our proposed \methodname{} (\methodabb{}), which seeks to harmonize the dual objectives of alignment accuracy and expressive range in model outputs.

\section{Preliminaries}
 \label{DPO} 
%\jun{remove this to have just one section without subsections. }In this section, we primarily introduce some common notations used in text generation (\textsection \ref{Nota}) and provide an overview of DPO (\textsection \ref{DPO}).
For language generation, a language model (LM) is prompted with prompt (question) ${x}$ to generate a response (answer) ${y}$, where both ${x}$ and ${y}$ consist of a sequence of tokens. Direct Preference Optimization (DPO) \cite{rafailov2023direct}
%\jun{starts this without previous section so do add defintion of x and y etc.}
 commences with the RL objective from the RLHF:
\begin{equation}
\begin{split}
\max_{\pi_\theta}\ \mathbb{E}_{{x}\sim\mathcal{D},{y}\sim\pi_\theta(\cdot|{x})}&\big[r({x},{y})\\
-\beta D_{\mathrm{KL}}&\big(\pi_\theta(\cdot\mid {x})\big\|\pi_{\mathrm{ref}}(\cdot\mid {x})\big)\big],
\end{split}\label{rlhfob}
\end{equation}
where $\mathcal{D}$ represents the human preference dataset, $r({x},{y})$ denotes the reward function, $\pi_{\mathrm{ref}}(\cdot|{x})$ serves as a reference model, typically chosen the language model after supervised fine-tuning, $\pi_{\theta}$ represents the model undergoing RL fine-tuning, initialized with $\pi_{\theta}=\pi_{\mathrm{ref}}$, and $\beta$ is the coefficient for the reverse KL divergence penalty.

By directly deriving from Eq.~\ref{rlhfob},
DPO establishes a mapping between the reward model and the optimal policy under the reverse KL divergence, obtaining a representation of the reward function concerning the policy:
% \begin{equation}
% r({x}, \cdot)=\beta\log\frac{\pi_{\theta}(\cdot|{x})}{\pi_{\mathrm{ref}}(\cdot|{x})}+\beta\log Z({x})
% \label{dpo_mapping}
% \end{equation}
\begin{equation}
r({x}, {y})=\beta\log\frac{\pi_{\theta}({y}|{x})}{\pi_{\mathrm{ref}}({y}|{x})}+\beta\log Z({x}).
\label{dpo_mapping}
\end{equation}
Here, $Z({x})$ is the partition function. 

To align with human preference, DPO uses the Bradley-Terry model for pairwise comparisons:
% \begin{equation}
%     \begin{aligned}
%         P_{\mathrm{BT}}({y}_1\succ {y}_2 | {x})&=\frac{\exp(r({x}, {y}_1))}{\exp(r({x}, {y}_1))+\exp(r({x}, {y}_2))}\\
%         &=\sigma(r({x}, {y}_1) - r({x}, {y}_2))
%     \end{aligned}
%     \label{BT_model}
% \end{equation}
\begin{align}
    P_{\mathrm{BT}}({y}_1\succ {y}_2 | {x})&=\frac{\exp(r({x}, {y}_1))}{\exp(r({x}, {y}_1))+\exp(r({x}, {y}_2))}.
    % \\
    % &=\sigma(r({x}, {y}_1) - r({x}, {y}_2)) 
    \label{BT_model}
\end{align}
% where $\sigma(x)=1/(1+exp(-x))$ is the logistic sigmoid function.

By substituting Eq.~\ref{dpo_mapping} into Eq.~\ref{BT_model} 
% PDO establishes a direct relationship between preference optimization and policy.
and leveraging the negative log-likelihood loss, DPO derives the objective function:
% \begin{equation}
\begin{align}
    u({x}, {y}_w, {y}_l)=\beta\log\frac{\pi_\theta({y}_w\mid {x})}{\pi_{\mathrm{ref}}({y}_w\mid {x})}-\beta\log\frac{\pi_\theta({y}_l\mid {x})}{\pi_{\mathrm{ref}}({y}_l\mid {x})},\nonumber\\
    \mathcal{L}_{\mathrm{DPO}}(\pi_\theta;\pi_{\mathrm{ref}})=-\mathbb{E}_{({x},{y}_w,{y}_l)\sim\mathcal{D}}\left[\log\sigma\left(u({x}, {y}_w, {y}_l)\right)\right],
\end{align}
and the derivative is given as follows:
\begin{equation}
\nabla_\theta\mathcal{L}_{\mathrm{DPO}}(\pi_\theta;\pi_{\mathrm{ref}})=-\mathbb{E}_{({x},{y}_w,{y}_l)\sim\mathcal{D}}
    \left[\sigma\left(-u\right)
    % \mathcal{G}({x}, {y}_w, {y}_l;\theta)
    \nabla_\theta u\right],
\end{equation}
where $u$ is the abbreviation of $u({x}, {y}_w, {y}_l)$, $y_w$ and $y_l$ denotes the preferred and dispreferred completion.
% \end{equation}
% However, under the above optimization objective of the DPO algorithm, the gap of the KL divergences between the preferred response subset and the dispreferred response subset tends to widen gradually, affecting the generation diversity of the LM, as illustrated in \cref{fig1}. For theoretical analysis to aid comprehension, we introduce some simplifications and formalize this imbalance based on the Total Variation (TV) divergence in the Lemma \ref{lemma3}. We will also demonstrate this phenomenon in the experimental section.
%  \begin{restatable}{lemma}{mylemmathree}\label{lemma3}
%     For a human preference dataset $\mathcal{D}$ and a pair-wise comparison $({x},{y}_w,{y}_l)\sim \mathcal{D}$, we decompose the derivative of the DPO loss function into two parts, removing the common factors between them, as follows: 
%     \begin{itemize}
%         \item[1. ] $\mathcal{G}({y}_w;\theta)=-\nabla_{\theta}\log\pi_{\theta}({y}_w|{x})$;
%         \item[2. ]$\mathcal{G}({y}_l;\theta)=\nabla_{\theta}\log\pi_{\theta}({y}_l|{x})$.
%     \end{itemize}
%     We assume $\pi_{\theta}({y}_w|{x}) > \pi_{\theta}({y}_l|{x})$ and $\nabla_{\theta}\pi_{\theta}({y}_w|{x}) = \nabla_{\theta}\pi_{\theta}({y}_l|{x})$. As the iterative optimization progresses, the gap of the sequential TV divergence between the preferred response subset and the dispreferred response subset gradually widens.
% \end{restatable}
% We prove \cref{lemma3} in \cref{A_3}. To address this issue in DPO, we introduce our algorithm below.

\section{Methodology}
%DPO is a simple and effective algorithm to align with human preference, however, we observed that DPO exhibits a weak control over KL divergence, and the growth rate of KL divergence on the dispreferred responses subset is significantly faster than that on the preferred responses subset, as shown in \cref{fig1}. 
% This could lead to a rapid degradation in the diversity of responses generated by an LM. 
%In this work, we propose a novel
%method named \methodname{} (\methodabb{}) to address this problem. \methodabb{} enhances the ability to regulate the KL divergence of LLMs, helping LLMs better align with human preferences while maintaining generation diversity. 

% We first derive the mapping between the state-action function and the optimal policy based on Eq.\ref{rpo_obj}. Subsequently, by substituting this into the Regret Preference Model, we obtain the optimization objective concerning the policy. We formally analyze this optimization objective and, based on it, derive the loss function for \methodabb{}.
In this section, we initially reformulate the constrained reward maximization problem into a token-level form. 
From this, we derive the mapping between the state-action function and the optimal policy. Subsequently, we convert the Bradley-Terry model into token-level representation, establishing its equivalence with the Regret Preference Model. By substituting the mapping relationship into the reward model in token-level format, we obtain the optimization objective solely related to the policy. Finally, we conduct a formalized analysis of this optimization objective in terms of derivatives and, based on this, derive the ultimate loss function for \methodabb{}.

% \analyse{
% \begin{itemize}
%     \item[1. ] Theoretically derive the optimization formula for RPO.
%     \item[2. ] Demonstrate the existence of imbalance in DPO and elucidate how RPO addresses this issue.
%     \item[3. ] Provide evidence of RPO attaining a superior Pareto frontier between reward and KL divergence.
% \end{itemize}
% }

% ###################
% \jun{this section requires rewrite and reorder on the basis of our discussion. we need to have 1) bellman backup function to bridge the sentence rewards with token-level reward, 2) which can be integrated into pair-wise comparison??? 3) gradient equation 4) peudo code - can be in appendix if too long 5 no need to have many lemma etc. this is not a math. a good narrative is to start from objective in Eq. (6), develop each by each until the gradient function. any more-complicated or repeating/similar-to-previous eqs can be refereed in the appendix while the flow and narrative should not change.}
% ###################
\subsection{Markov Decision Process under Token Rewards} \label{Nota}  
%In the formulation of text generation, a language model (LM) is prompted with prompt ${x}$ to generate a response ${y}$, where both ${x}$ and ${y}$ consist of several tokens. 
To model the sequential, auto-regressive generation, we extend the sentence-level formulation in Section~\ref{DPO} by considering that the response consists of $T$ tokens ${y} = y^{<T+1}\coloneqq [y^1, y^2, ..., y^T]$, where $y^t\in \mathcal{Y}$, and $\mathcal{Y}$ represents the alphabet (vocabulary). Additionally, we assume $y^{<1}=[\ ]$.
 Given a prompt ${x}$ and the first $t-1$ tokens $y^{<t}$ of the response ${y}$, the LM predicts the probability distribution of the next token $\pi_{\theta}(\cdot|[{x}, y^{<t}])$. 

When modeling text generation as a Markov decision process \cite{puterman2014markov}, a state is a combination of the prompt and the generated response up to the current step, denoted as $s_t = [{x}, y^{<t}]$. An action corresponds to the next generated token, denoted as $a_t = y^t$, and the token-wise reward is defined as $R_t := R(s_t, a_t) = R([{x}, y^{<t}], y^t)$. 

Expanding on the provided definitions, we establish the state-action function $Q_{\pi}$, the state value function $V_{\pi}$ and the advantage function $A_{\pi}$ for a policy $\pi$:
% \begin{equation*}
%     \begin{aligned}
%     Q_{\pi}([{x},y^{<t}],y^t)&=\mathbb{E}_{\pi}\left[\sum_{k=0}^{\infty}\gamma^{k} R_{t+k}\bigg|s_{t}=[{x},y^{<t}], a_t = y^t\right]\\
%     V_{\pi}([{x},y^{<t}])
%     % &=\mathbb{E}_{\pi}\left[\sum_{k=0}^{\infty}\gamma^{k} R_{t+k}\bigg|s_{t}=[{x},y^{<t}]\right]\\
%     &= \mathbb{E}_{a_t\sim\pi}\left[Q_{\pi}([{x},y^{<t}],y^t)\right]\\
%     A_{\pi}([{x},y^{<t}], y^t)&=Q_{\pi}([{x},y^{<t}],y^t)-V_{\pi}([{x},y^{<t}])
%     \end{aligned}
% \end{equation*}
% \begin{equation*}
%     \begin{aligned}
%     Q_{\pi}([{x},y^{<t}],y^t) & = \mathbb{E}_{\pi}\left[\sum_{k=0}^{\infty}\gamma^{k} R_{t+k}\bigg|s_{t}=[{x},y^{<t}], a_t = y^t\right]\\
%     V_{\pi}([{x},y^{<t}]) & = \mathbb{E}_{a_t\sim\pi}\left[Q_{\pi}([{x},y^{<t}],y^t)\right]\\
%     A_{\pi}([{x},y^{<t}], y^t) & = Q_{\pi}([{x},y^{<t}],y^t)-V_{\pi}([{x},y^{<t}])
%     \end{aligned}
% \end{equation*}
\begin{equation}
    {\small
    \begin{aligned}
    Q_{\pi}([{x},y^{<t}],y^t) & = \mathbb{E}_{\pi}\left[\sum_{k=0}^{\infty}\gamma^{k} R_{t+k}\bigg|s_{t}=[{x},y^{<t}], a_t = y^t\right],\\
    V_{\pi}([{x},y^{<t}]) & = \mathbb{E}_{\pi}\big[Q_{\pi}([{x},y^{<t}],y^t)\big|s_t=[{x},y^{<t}]\big],\\
    A_{\pi}([{x},y^{<t}], y^t) & = Q_{\pi}([{x},y^{<t}],y^t)-V_{\pi}([{x},y^{<t}]).
    \end{aligned}}\label{QVA}
\end{equation}
where $\gamma$ represents the discount factor. In this paper, we set $\gamma=1$.

\subsection{Token-Level Optimization}
DPO's objective function in Eq.~\ref{rlhfob} operates at the sentence level. In contrast, we propose an alternative token-level objective function:
% To establish a token-wise analysis, we modify the objective function as follows:
\begin{equation}
    \begin{split}
    \max_{\pi_{\theta}} \ &\mathbb{E}_{{x}, {y}^{<t}\sim\mathcal{D},z\sim \pi_{\theta}(\cdot|[{x},y^{<t}])}\big[A_{\pi_{\mathrm{
ref}}}([{x},y^{<t}], z)\\
    &-\beta D_{\mathrm{KL}}\left(\pi_{\theta}(\cdot|[{x},y^{<t}])||\pi_{\mathrm{ref}}(\cdot|[{x},y^{<t}])\right)\big].
    \end{split}\label{rpo_obj}
\end{equation}
% where timestep $t \in \{1,2,...,T\}$. 

The objective function is inspired by Trust Region Policy Optimization
 (TRPO) \cite{schulman2015trust}. As demonstrated in Lemma \ref{lemma_2}, maximizing the objective function in Eq.~\ref{rpo_obj} will result in policy improvements in terms of expected return.
 \begin{restatable}{lemma}{mylemmatwo}\label{lemma_2}
    Given two policies $\pi$ and $\Tilde{\pi}$, if for any state $s_t=[{x}, y^{<t}]$, $\mathbb{E}_{z\sim \Tilde{\pi}
    % (\cdot|s_t)
    }\left[A_{\pi}([{x}, y^{<t}], z )\right] \ge 0$, then we can conclude:
\begin{equation*}
    \mathbb{E}_{{x}\sim \mathcal{D}}\left[V_{\Tilde{\pi}}([{x}])\right] \ge \mathbb{E}_{{x}\sim \mathcal{D}}\left[V_{\pi}([{x}])\right],
\end{equation*}
% where $s_1=[{x}, y^{<1}] = [{x}]$ is sampled from  the true distribution of the prompt ${x}$. 
\end{restatable}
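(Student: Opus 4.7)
The plan is to adapt the classical Performance Difference Lemma of Kakade and Langford to this token-level MDP. The cornerstone is the identity
\begin{equation*}
V_{\tilde{\pi}}([x]) - V_\pi([x]) = \mathbb{E}_{\tau \sim \tilde{\pi} \mid s_0=[x]}\Bigl[\sum_{t\ge 0} \gamma^t A_\pi(s_t, a_t)\Bigr],
\end{equation*}
where $\tau=(s_0, a_0, s_1, \ldots)$ is a trajectory generated by rolling out $\tilde{\pi}$ from the initial state $s_0=[x]$. Granted this identity, taking $\mathbb{E}_{x\sim\mathcal{D}}$ of both sides and using the tower property, the right-hand side becomes $\sum_t \gamma^t \, \mathbb{E}_{s_t \sim \tilde{\pi}}\bigl[\mathbb{E}_{z \sim \tilde{\pi}(\cdot \mid s_t)}[A_\pi(s_t, z)]\bigr]$. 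The hypothesis forces each inner conditional expectation to be non-negative, so the whole sum is non-negative, and the desired inequality $\mathbb{E}_x[V_{\tilde{\pi}}([x])] \ge \mathbb{E}_x[V_\pi([x])]$ follows immediately.

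To establish the identity itself, I would start from the Bellman relation $A_\pi(s_t, a_t) = R_t + \gamma V_\pi(s_{t+1}) - V_\pi(s_t)$ implied by the definitions in \cref{QVA}. Plugging this into the trajectory expectation and using linearity of expectation, the sum decomposes into three pieces: $\sum_t \gamma^t R_t$, $\sum_t \gamma^{t+1} V_\pi(s_{t+1})$, and $-\sum_t \gamma^t V_\pi(s_t)$. The expectation of the first piece along $\tilde{\pi}$-rollouts from $[x]$ is exactly $V_{\tilde{\pi}}([x])$ by the definition of the value function. The remaining two pieces form a telescoping sum that collapses to $-V_\pi(s_0) = -V_\pi([x])$, yielding the identity.

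The main technical subtlety I anticipate concerns termination together with the paper's choice $\gamma=1$. Since language generation terminates after at most $T$ tokens (or upon emitting an end-of-sequence token), the trajectory sum is effectively finite, and one can set $V_\pi(s_T) = 0$ on terminal states so that the boundary term of the telescope vanishes cleanly. A small amount of bookkeeping is needed to formalize this — writing the identity with a random finite horizon $T(\tau)$ and verifying that the tail term is zero — but no deeper new idea is required. Once termination has been handled, the remainder of the argument is a direct transcription of the standard policy-improvement proof.
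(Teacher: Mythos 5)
Your proposal is correct and follows essentially the same route as the paper: the appendix proof is precisely the performance-difference identity obtained by inserting the telescoping Bellman decomposition $R_t+\gamma V_\pi(s_{t+1})-V_\pi(s_t)=A_\pi(s_t,a_t)$ into $\mathbb{E}_{\tau\mid\tilde\pi}\bigl[\sum_t\gamma^{t-1}R_t-V_\pi([x])\bigr]$, followed by the tower property and the non-negativity hypothesis. The termination/boundary issue you flag is handled in the paper only implicitly (the telescope is taken over an infinite sum with the tail assumed to vanish), so your explicit bookkeeping is if anything slightly more careful.
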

The proof is provided in Appendix \ref{A_2}.

Notably, to maintain generation diversity and prevent the model from hacking some high-reward answers,
we incorporate reverse KL divergence for each token in our token-level objective function, which prevents the model from deviating too far from the reference model distribution.

Starting from the token-level objective function in Eq.~\ref{rpo_obj}, we can directly derive the mapping between the state-action function $Q_{\pi}$ and the optimal policy $\pi_{\theta}^*$. We summarize this relationship in the following lemma.
\begin{restatable}{lemma}{mylemmafour}\label{lemma4}
    The constrained problem in Eq.~\ref{rpo_obj} has the closed-form solution:
\begin{equation}
\begin{aligned}
    \pi_{\theta}^*(z|[{x}&,y^{<t}])= \\
     &\frac{\pi_{\mathrm{ref}}(z|[{x},y^{<t}])\exp\left(\frac{1}{\beta}Q_{\pi_{\mathrm{ref}}}([{x},y^{<t}],z)\right)}{Z([{x},y^{<t}];\beta)},
\end{aligned}\label{eq_3}
\end{equation}
where $Z([{x},y^{<t}];\beta) = \mathbb{E}_{z\sim \pi_{\mathrm{ref}}(\cdot|[{x},y^{<t}])}e^{\frac{1}{\beta}Q_{\pi_{\mathrm{ref}}}([{x},y^{<t}],z)}$
is the partition function.
\end{restatable}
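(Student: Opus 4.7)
The plan is to mimic the closed-form derivation in DPO (Eq.~\ref{dpo_mapping}), but applied per state rather than per prompt, and with $Q_{\pi_{\mathrm{ref}}}$ playing the role of the reward. The outer expectation over $(x, y^{<t}) \sim \mathcal{D}$ and the reverse KL penalty both decompose over states, and $\pi_\theta(\cdot \mid s_t)$ can be chosen independently at each state $s_t = [x, y^{<t}]$. Thus it suffices to solve, for each fixed $s_t$, the inner problem
\begin{equation*}
\max_{\pi_\theta(\cdot\mid s_t)}\ \mathbb{E}_{z\sim \pi_\theta(\cdot\mid s_t)}\bigl[A_{\pi_{\mathrm{ref}}}(s_t, z)\bigr] - \beta D_{\mathrm{KL}}\bigl(\pi_\theta(\cdot\mid s_t)\,\|\,\pi_{\mathrm{ref}}(\cdot\mid s_t)\bigr),
\end{equation*}
subject to $\sum_{z\in\mathcal{Y}} \pi_\theta(z\mid s_t) = 1$.

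First I would use the definition $A_{\pi_{\mathrm{ref}}}(s_t, z) = Q_{\pi_{\mathrm{ref}}}(s_t, z) - V_{\pi_{\mathrm{ref}}}(s_t)$ to drop the $V_{\pi_{\mathrm{ref}}}(s_t)$ term, since it is a constant in $z$ and contributes only an additive constant $-V_{\pi_{\mathrm{ref}}}(s_t)$ to the objective (independent of $\pi_\theta$). The problem then reduces to the standard KL-regularized maximization of $\mathbb{E}_{z\sim\pi_\theta}[Q_{\pi_{\mathrm{ref}}}(s_t, z)]$ against the anchor $\pi_{\mathrm{ref}}(\cdot\mid s_t)$.

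Next I would apply the identity
\begin{equation*}
\mathbb{E}_{z\sim\pi}\bigl[f(z)\bigr] - \beta D_{\mathrm{KL}}(\pi\,\|\,q) = -\beta D_{\mathrm{KL}}\!\left(\pi\,\Big\|\,\tfrac{1}{Z}\,q(z)\,e^{f(z)/\beta}\right) + \beta \log Z,
\end{equation*}
with $f = Q_{\pi_{\mathrm{ref}}}(s_t, \cdot)$, $q = \pi_{\mathrm{ref}}(\cdot\mid s_t)$, and $Z = \mathbb{E}_{z\sim\pi_{\mathrm{ref}}(\cdot\mid s_t)}\exp\!\bigl(\tfrac{1}{\beta} Q_{\pi_{\mathrm{ref}}}(s_t, z)\bigr) = Z([x, y^{<t}];\beta)$. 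Since KL divergence is non-negative and vanishes iff the two distributions coincide, the maximum is attained uniquely at
\begin{equation*}
\pi_\theta^*(z\mid [x, y^{<t}]) = \frac{\pi_{\mathrm{ref}}(z\mid [x, y^{<t}])\exp\!\bigl(\tfrac{1}{\beta} Q_{\pi_{\mathrm{ref}}}([x, y^{<t}], z)\bigr)}{Z([x, y^{<t}];\beta)},
\end{equation*}
which is the claimed closed form. One may alternatively arrive here by a Lagrangian argument: form $\mathcal{L} = \mathbb{E}_{\pi_\theta}[Q_{\pi_{\mathrm{ref}}}] - \beta\sum_z \pi_\theta(z)\log\tfrac{\pi_\theta(z)}{\pi_{\mathrm{ref}}(z)} - \lambda(\sum_z \pi_\theta(z) - 1)$, set $\partial \mathcal{L}/\partial \pi_\theta(z) = 0$, solve to get $\pi_\theta^*(z) \propto \pi_{\mathrm{ref}}(z)\exp(Q_{\pi_{\mathrm{ref}}}(s_t,z)/\beta)$, and fix $\lambda$ by the normalization constraint.

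The derivation is essentially bookkeeping, so there is no serious obstacle; the only thing worth flagging carefully is verifying that the optimization genuinely decouples across states. This follows because the outer expectation over $(x, y^{<t})$ is with respect to a fixed data distribution (independent of $\pi_\theta$) and the inner sampling $z \sim \pi_\theta(\cdot\mid s_t)$ only touches the conditional at $s_t$; equivalently, we are pointwise-maximizing the integrand in the expectation over states, and a pointwise maximizer of a non-negative integrand is a maximizer of the integral. Once this is noted, applying the log-partition identity (or Lagrangian) yields Eq.~\ref{eq_3}.
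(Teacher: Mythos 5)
Your proposal is correct and follows essentially the same route as the paper's own proof in Appendix~\ref{A_4}: drop $V_{\pi_{\mathrm{ref}}}$ as a constant in $z$, absorb the reward term and the reverse KL into a single KL divergence against the Boltzmann-reweighted reference $\pi_{\mathrm{ref}}\exp(Q_{\pi_{\mathrm{ref}}}/\beta)/Z$, and invoke non-negativity of KL to identify the unique maximizer. Your added remarks on decoupling across states and the Lagrangian alternative are fine but not a different argument in substance.
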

 See \cref{A_4} for more details. 
 
 To obtain the optimal policy $\pi_{\theta}^*$ from Eq.~\ref{eq_3}, we must estimate the state-action function $Q_{\pi_{\mathrm{ref}}}$ and the partition function $Z(\cdot)$. However, ensuring the accuracy of the state-action function $Q_{\pi}$ at each state and action is challenging, and estimating the partition function $Z(\cdot)$ is also difficult. Therefore, we reorganize Eq.~\ref{eq_3} to obtain the expression of the state-value function in terms of the policy:
 
\begin{equation}
    \begin{aligned}
        Q_{\pi_{\mathrm{ref}}}&([{x},y^{<t}],z)= \\
         &\beta \log\frac{\pi_{\theta}^*(z|[{x}, y^{<t}])}{\pi_{\mathrm{ref}}(z|[{x}, y^{<t}])} + \beta \log Z([{x}, y^{<t}];\beta).
    \end{aligned}\label{eq4}
\end{equation}
% \vspace{0.01mm}
% \vskip 0.01in 
\subsection{BT Model Reformulation via Advantage Function}
To facilitate subsequent derivations, we first introduce the sequential KL divergence, as defined in \cref{def1}.
\begin{definition}\label{def1}
    Given two language models $\pi_{1}$ and $\pi_{2}$, with the input prompt ${x}$ and output response ${y}$, the sequential KL divergence is defined as:
    \begin{equation}
    \begin{split}
        D_{\mathrm{SeqKL}}({x},& {y};\pi_{1}\|\pi_{2})=\\
        &\sum\limits_{t=1}^TD_{\mathrm{KL}}(\pi_{1}(\cdot|[{x}, y^{<t}])\|\pi_{2}(\cdot|[{x}, y^{<t}])).
    \end{split}
    \end{equation}
\end{definition}

% The Bradley-Terry model expresses the human preference probability based on prompts ${x}$ and pairwise responses (${y}_1$, ${y}_2$).
Given prompts ${x}$ and pairwise responses (${y}_1$, ${y}_2$), the Bradley-Terry model expresses the human preference probability.
However, since the Bradley-Terry model is formulated at the sentence level, it cannot establish a connection with the token-level mapping presented in Eq.~\ref{eq4}. Consequently, we need to derive a token-level preference model. Initiating from the Bradley-Terry model, we transform it into a token-level formulation and demonstrate its equivalence with the Regret Preference Model \cite{knox2023learning, knox2022models}, as shown in the \cref{lemma1}.
% \begin{lemma}\label{lemma1}
%     Under the definition of token-wise reward in Eq.\ref{def_r}, the Bradley-Terry Model is equivalent to the Regret Preference Model.
% \end{lemma}
\begin{restatable}{lemma}{mylemmaone}\label{lemma1}
Given a reward function $r(\mathrm{x}, \mathrm{y})$, assuming a relationship between token-wise rewards and the reward function represented by $r({x}, {y}) = \sum_{t=1}^T\gamma^{t-1}R([{x},y^{<t}], y^t)$, we can establish the equivalence between the Bradley-Terry model and the Regret Preference Model in the task of text generation alignment, i.e.,
\begin{equation}
    {\small
    \begin{aligned}
        P_{\mathrm{BT}}&({y}_1 \succ {y}_2 |{x})=\\
        \sigma&\left(\sum_{t=1}^{T_1}\gamma^{t-1}A_{\pi}([{x},y_1^{<t}], y_1^{t}) - \sum_{t=1}^{T_2}\gamma^{t-1}A_{\pi}([{x},y_2^{<t}], y_2^{t})\right),
    \end{aligned}}\label{BT_eq_RPM}
\end{equation}
where $\sigma(x)=1/(1+exp(-x))$ is the logistic sigmoid function.
\end{restatable}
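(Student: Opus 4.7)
The plan is to reduce the claim to a telescoping identity that relates the sentence-level reward $r({x}, {y})$ to a sum of token-level advantages plus a term depending only on the prompt ${x}$, which will then cancel in the pairwise comparison.

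First, I would rewrite the Bradley--Terry expression in Eq.~\ref{BT_model} in its logistic form, giving $P_{\mathrm{BT}}({y}_1 \succ {y}_2 \mid {x}) = \sigma\bigl(r({x}, {y}_1) - r({x}, {y}_2)\bigr)$ by dividing numerator and denominator by $\exp(r({x}, {y}_1))$. It therefore suffices to establish the single-response identity
$$r({x}, {y}) = \sum_{t=1}^{T} \gamma^{t-1} A_{\pi}([{x}, y^{<t}], y^{t}) + V_{\pi}([{x}]),$$
because subtracting this identity for ${y}_1$ and ${y}_2$ cancels the common $V_{\pi}([{x}])$ term and produces exactly the argument of $\sigma$ in Eq.~\ref{BT_eq_RPM}.

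For this identity, I would use the Bellman relation $Q_{\pi}(s_t, a_t) = R(s_t, a_t) + \gamma V_{\pi}(s_{t+1})$ together with the definition $A_{\pi} = Q_{\pi} - V_{\pi}$ from Eq.~\ref{QVA} to rewrite each token-wise reward as $R([{x}, y^{<t}], y^{t}) = A_{\pi}([{x}, y^{<t}], y^{t}) + V_{\pi}([{x}, y^{<t}]) - \gamma V_{\pi}([{x}, y^{<t+1}])$. Substituting this into the hypothesis $r({x}, {y}) = \sum_{t=1}^{T} \gamma^{t-1} R([{x}, y^{<t}], y^{t})$ produces, via a standard index shift, a telescoping sum in the $V_{\pi}$ terms that collapses to $V_{\pi}([{x}, y^{<1}]) - \gamma^{T} V_{\pi}([{x}, y^{<T+1}]) = V_{\pi}([{x}]) - \gamma^{T} V_{\pi}([{x}, {y}])$.

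The main subtlety is how to dispose of the boundary term $\gamma^{T} V_{\pi}([{x}, {y}])$. The natural route is to invoke the standard MDP convention that the state reached after the full response is terminal, so $V_{\pi}([{x}, {y}]) = 0$; this is consistent with the MDP set-up of \cref{Nota} (and with the choice $\gamma = 1$). Alternatively, one can observe that the boundary term depends only on the terminal value, so it cancels across the ${y}_1$ versus ${y}_2$ difference whenever both responses share the same terminal-value semantics. Modulo this boundary convention, everything else is bookkeeping on the telescoping sum, and the BT/RPM equivalence stated in Eq.~\ref{BT_eq_RPM} follows by applying $\sigma(\cdot)$ to the resulting expression.
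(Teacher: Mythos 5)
Your proposal is correct and follows essentially the same route as the paper's proof: rewrite the Bradley--Terry probability as $\sigma(r({x},{y}_1)-r({x},{y}_2))$, telescope the token-reward sum via the Bellman/advantage decomposition into $V_{\pi}([{x}])+\sum_t\gamma^{t-1}A_{\pi}$ minus a terminal value, kill the terminal value using the EOS/terminal-state convention, and cancel the shared $V_{\pi}([{x}])$ across the two responses. The only cosmetic difference is that your Bellman step carries the discount $\gamma$ explicitly, which is if anything slightly cleaner than the paper's statement of Eq.~\ref{app_Q}.
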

We prove this lemma in \ref{equivalence}.

In \cref{lemma1}, we assume that $r({x}, {y}) = \sum_{t=1}^T\gamma^{t-1}R([{x},y^{<t}], y^t)$. This assumption is natural in the context of RL, where $r({x}, {y})$ represents the overall reward for response ${y}$ given the prompt ${x}$. Considering text generation as a sequential decision-making problem, $r({x}, {y})$ can be viewed as the cumulative reward for the generated text.

According to the definition of the advantage function in \cref{Nota}, we can directly establish the relationship between the optimal solution in Eq.~\ref{eq4} and preference optimization objective in Eq.~\ref{BT_eq_RPM}. One intractable aspect is that the state-action function $Q_{\pi}$ depends on a partition function, which is contingent on both the input prompt ${x}$ and the output response ${y}$. This results in non-identical values of the partition function for a pair of responses (${y}_w$, ${y}_l$), specifically, $Z([{x},y_w^{<t}];\beta)\ne Z([{x},y_l^{<t}];\beta)$. As a result, we cannot employ a cancellation strategy similar to DPO, which relies on the property that the Bradley-Terry model depends only on the difference in rewards between two completions.

Fortunately, by expanding the advantage function $A_{\pi}$ and converting the state-value function $V_{\pi}$ into a form exclusively related to the state-action function $Q_{\pi}$, we can offset the partition function naturally. In this way, we ultimately reformulate the Bradley-Terry model to be directly tied to the optimal policy $\pi_{\theta}^*$ and the reference policy $\pi_{\mathrm{ref}}$. This is summarized in the following theorem.
% \begin{equation}
%     \begin{aligned}
%         \mathcal{R}&({x}, {y})\\
%         =&\sum_{t=1}^{T}\left( \log\frac{\pi^*(y^{t}|[{x}, y^{<t}])}{\pi_{\mathrm{ref}}(y^{t}|[{x}, y^{<t}])}+D_{\mathrm{KL}}\left(\pi_{\mathrm{ref}}(\cdot|[{x},y^{<t}])\| \pi^*(\cdot|[{x},y^{<t}])\right)\right)
%     \end{aligned}\label{eq_5}
% \end{equation}

\begin{restatable}{theorem}{mytheoremone}\label{theorem1}
    In the KL-constrainted advantage function maximization problem corresponding to Eq.\ref{rpo_obj}, the Bradley-Terry model express the human preference probability in terms of the optimal policy $\pi_{\theta}^*$ and reference policy $\pi_{\mathrm{ref}}$:
    \begin{equation}
    \small
        P_{\mathrm{BT}}^*({y}_1 \succ {y}_2 |{x})=\sigma(u^*({x}, {y}_1, {y}_2) - \delta^*({x}, {y}_1, {y}_2)),\label{PBT_pi}
    \end{equation}
    where, $u({x}, {y}_1, {y}_2)$ refers to the difference in rewards implicitly defined by the language model $\pi_{\theta}$ and the reference model $\pi_{\mathrm{ref}}$ \cite{rafailov2023direct}, represented as
    \begin{equation}
    \small
    u({x}, {y}_1, {y}_2)=\beta\log\frac{\pi_{\theta}({y}_1\mid {x})}{\pi_{\mathrm{ref}}({y}_1\mid {x})}-\beta\log\frac{\pi_{\theta}({y}_2\mid {x})}{\pi_{\mathrm{ref}}({y}_2\mid {x})}, \label{u_function}
    \end{equation}
    and $\delta({x}, {y}_1, {y}_2)$ refers to the difference in sequential forward KL divergence between two pairs $({x}, {y}_1)$ and $({x}, {y}_2)$, weighted by $\beta$, expressed as
    \begin{equation}
        \begin{aligned}
    \delta({x}, {y}_1, {y}_2) =&\beta D_{\mathrm{SeqKL}}\left({x},{y}_2;\pi_{\mathrm{ref}}\| \pi_{\theta}\right)\\
    &\qquad -\beta D_{\mathrm{SeqKL}}\left({x},{y}_1;\pi_{\mathrm{ref}}\| \pi_{\theta}\right).
        \end{aligned}\label{delta_function}
    \end{equation}

\end{restatable}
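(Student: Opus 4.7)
The plan is to chain together Lemma~\ref{lemma1} with the closed-form mapping from Lemma~\ref{lemma4}, eliminate the awkward state-dependent partition function by exploiting the structure $A_\pi = Q_\pi - V_\pi$, and then telescope over tokens to recover sentence-level log-ratios plus sequential KL terms.

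First I would invoke Lemma~\ref{lemma1} to rewrite $P_{\mathrm{BT}}^*(y_1 \succ y_2 \mid x)$ as $\sigma\bigl(\sum_{t=1}^{T_1}A_{\pi_{\mathrm{ref}}}([x,y_1^{<t}],y_1^{t}) - \sum_{t=1}^{T_2}A_{\pi_{\mathrm{ref}}}([x,y_2^{<t}],y_2^{t})\bigr)$, specialized to $\gamma = 1$. The task then reduces to expressing each per-token advantage $A_{\pi_{\mathrm{ref}}}([x,y^{<t}],y^t) = Q_{\pi_{\mathrm{ref}}}([x,y^{<t}],y^t) - V_{\pi_{\mathrm{ref}}}([x,y^{<t}])$ purely in terms of $\pi_\theta^*$ and $\pi_{\mathrm{ref}}$.

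The main obstacle, as the paragraph preceding the theorem flags, is that the partition function $Z([x,y^{<t}];\beta)$ from Lemma~\ref{lemma4} depends on the prefix $y^{<t}$, so a naive cancellation between $y_1$ and $y_2$ fails. The trick I would use is to handle the cancellation \emph{inside a single trajectory} rather than across the two: substitute Eq.~\ref{eq4} into $Q_{\pi_{\mathrm{ref}}}([x,y^{<t}],y^t)$, and at the same time write $V_{\pi_{\mathrm{ref}}}([x,y^{<t}]) = \mathbb{E}_{z \sim \pi_{\mathrm{ref}}(\cdot\mid[x,y^{<t}])}[Q_{\pi_{\mathrm{ref}}}([x,y^{<t}],z)]$ and substitute Eq.~\ref{eq4} there too. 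The $\beta\log Z([x,y^{<t}];\beta)$ contribution is the same constant in both $Q$ and $V$, so it cancels in $A_{\pi_{\mathrm{ref}}}$, leaving
\begin{equation*}
A_{\pi_{\mathrm{ref}}}([x,y^{<t}],y^t) = \beta\log\frac{\pi_\theta^*(y^t\mid[x,y^{<t}])}{\pi_{\mathrm{ref}}(y^t\mid[x,y^{<t}])} + \beta D_{\mathrm{KL}}\bigl(\pi_{\mathrm{ref}}(\cdot\mid[x,y^{<t}]) \,\big\|\, \pi_\theta^*(\cdot\mid[x,y^{<t}])\bigr),
\end{equation*}
using $\mathbb{E}_{z\sim\pi_{\mathrm{ref}}}[\log\pi_\theta^*(z)/\pi_{\mathrm{ref}}(z)] = -D_{\mathrm{KL}}(\pi_{\mathrm{ref}}\|\pi_\theta^*)$.

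Finally I would sum this identity over $t = 1,\dots,T$ for each response. The first term telescopes via the autoregressive factorization $\log\pi(y\mid x) = \sum_t \log\pi(y^t\mid[x,y^{<t}])$ into $\beta\log\bigl(\pi_\theta^*(y\mid x)/\pi_{\mathrm{ref}}(y\mid x)\bigr)$, while the second term becomes exactly $\beta D_{\mathrm{SeqKL}}(x,y;\pi_{\mathrm{ref}}\|\pi_\theta^*)$ by Definition~\ref{def1}. Taking the difference between the sums for $y_1$ and $y_2$ produces $u^*(x,y_1,y_2)$ from Eq.~\ref{u_function} plus a residual $\beta D_{\mathrm{SeqKL}}(x,y_1;\pi_{\mathrm{ref}}\|\pi_\theta^*) - \beta D_{\mathrm{SeqKL}}(x,y_2;\pi_{\mathrm{ref}}\|\pi_\theta^*) = -\delta^*(x,y_1,y_2)$, which after wrapping by $\sigma(\cdot)$ gives exactly Eq.~\ref{PBT_pi}. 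The only subtlety worth double-checking is the sign convention of $\delta$ matching Eq.~\ref{delta_function}, and the implicit reliance on $\gamma=1$ so that per-token terms sum without discounting into the full-sequence log-likelihood.
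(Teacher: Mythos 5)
Your proposal is correct and follows essentially the same route as the paper's proof in Appendix~\ref{A_5}: invoke Lemma~\ref{lemma1}, substitute Eq.~\ref{eq4} into both $Q_{\pi_{\mathrm{ref}}}$ and $V_{\pi_{\mathrm{ref}}}=\mathbb{E}_{z\sim\pi_{\mathrm{ref}}}[Q_{\pi_{\mathrm{ref}}}]$ so that $\beta\log Z([x,y^{<t}];\beta)$ cancels within each per-token advantage, identify the residual expectation as a forward KL term, and sum over tokens to recover $u^*-\delta^*$. The only cosmetic difference is that you specialize to $\gamma=1$ at the outset while the paper carries $\gamma^{t-1}$ through and sets $\gamma=1$ at the end.
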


The proof is provided in the \cref{A_5}.

\subsection{Loss Function and Formal Analysis}\label{Loss Function}
Drawing on Eq.~\ref{PBT_pi}, we reformulate the Bradley-Terry model into a structure solely relevant to the policy. This allows us to formulate a likelihood maximization objective for a parametrized policy $\pi_{\theta}$, leading to the derivation of the loss function for the initial version of our method, $\mathrm{\methodabb{}}_1$:
\begin{align}
    % &P_{\mathrm{BT}}^*({y}_1 \succ {y}_2 |{x})=\sigma(u^*({x}, {y}_1, {y}_2) + \delta^*({x}, {y}_1, {y}_2))\nonumber\\
    % &\mathcal{R}({x}, {y})=\log\frac{\pi_{\theta}({y}|{x})}{\pi_{\mathrm{ref}}({y}|{x})}+D_{\mathrm{SeqKL}}\left({x},{y};\pi_{\mathrm{ref}}\| \pi_{\theta}\right)\nonumber\\
    &\mathcal{L}_{\mathrm{\methodabb{}_1}}\left(\pi_\theta ;\pi_{\mathrm{ref}}\right)\nonumber= \\
    &\quad -\mathbb{E}_{({x},{y}_w,{y}_l)\sim \mathcal{D}}\left[\log\sigma\left(u({x}, {y}_w, {y}_l) - \delta({x}, {y}_w, {y}_l)\right)\right].\label{eq_6}
\end{align}

Through this approach, we explicitly introduce sequential forward KL divergence into the loss function. Coupled with the implicitly integrated reverse KL divergence, we enhance our ability to balance alignment performance and generation diversity of LLMs. 

Subsequently, we conduct a derivative analysis of our method and make specific modifications to the loss function of \methodabb{}. For convenience, we use $u$ to denote $u({x}, {y}_w, {y}_l)$, and $\delta$ to represent 
$\delta({x}, {y}_w, {y}_l)$. By employing the formulation of the loss function presented in Eq.\ref{eq_6}, we compute the gradient of the loss function with respect to the parameters $\theta$: 
\begin{equation}
\begin{aligned}
    % &w = \sigma\left( \mathcal{R}({x}, {y}_l)-\mathcal{R}({x}, {y}_w)\right)\nonumber\\
    % &\mathcal{G}({x}, {y}_w, {y}_l;\theta)=\nabla_\theta\log\pi({y}_w\mid x)-\nabla_\theta\log\pi({y}_l\mid x)\nonumber\\
    % \delta({x}, {y}_w, {y}_l;\theta)
    % &\delta=D_{\mathrm{SeqKL}}\left({x},{y}_w;\pi_{\mathrm{ref}}\| \pi_{\theta}\right)-D_{\mathrm{SeqKL}}\left({x},{y}_l;\pi_{\mathrm{ref}}\| \pi_{\theta}\right)\nonumber\\
    \nabla_\theta\mathcal{L}_{\mathrm{\methodabb{}_1}}&(\pi_\theta;\pi_{\mathrm{ref}})=\\
    -&\mathbb{E}_{({x},{y}_w,{y}_l)\sim\mathcal{D}}
    \left[\sigma\left(-u+\delta\right)\left[
    % \mathcal{G}({x}, {y}_w, {y}_l;\theta)
    \nabla_\theta u
    - \nabla_{\theta}\delta\right]\right].
\end{aligned}\label{gradient}
\end{equation}
% Here, $w$ serves as the weight factor for the loss function, adjusting the learning rate. $w$ consists of two parts:
% \begin{equation*}
% \begin{aligned}
%     u_1 &= \beta\log \frac{\pi_{\theta}({x}, {y}_l)}{\pi_{\mathrm{ref}}({x}, {y}_l)}-\beta\log\frac{\pi_{\theta}({x}, {y}_w)}{\pi_{\mathrm{ref}}({x}, {y}_w)}\\
%     u_2 &= D_{\mathrm{SeqKL}}\left({x},{y}_l;\pi_{\mathrm{ref}}\| \pi_{\theta}\right) - D_{\mathrm{SeqKL}}\left({x},{y}_w;\pi_{\mathrm{ref}}\| \pi_{\theta}\right)
%     % = D_{\mathrm{SeqKL}}\left({x},{y}_l;\pi_{\mathrm{ref}}\| \pi_{\theta}\right) -  D_{\mathrm{SeqKL}}\left({x},{y}_w;\pi_{\mathrm{ref}}\| \pi_{\theta}\right)
%     \\
%     w &= \sigma\left(u_1 + u_2\right)
% \end{aligned}
% \end{equation*}

In Eq.~\ref{gradient}, $\sigma(-u+\delta)$ serves as the weighting factor for the gradient.
The first part $(-u)$ corresponds to the weight factor in the loss function of DPO. When the language model makes errors in predicting human preferences, i.e., $\log\frac{\pi_{\theta}({y}_l\mid {x})}{\pi_{\mathrm{ref}}({y}_l\mid {x})} > \log\frac{\pi_{\theta}({y}_w\mid {x})}{\pi_{\mathrm{ref}}({y}_w\mid {x})}$, the value of $(-u)$ will become larger, applying a stronger update for the comparison $({y}_w, {y}_l)$. While the second part $\delta$ is a distinctive component of our method. As shown in \cref{fig1}, the KL divergence growth rate for the dispreferred response subset is faster than that for the preferred response subset. With the increasing disparity, the corresponding value of $\delta$ rises, thereby amplifying the weight factor $\sigma(-u+\delta)$. Combined with the subsequent gradient term, our objective function can effectively suppress the difference in KL divergence between pairs of responses with large disparities in KL divergence. Through the collaborative influence of the weight factor $\delta$ and the gradient term $(-\nabla_{\theta}\delta)$, our method achieves the purpose of automatic control over the KL divergence balance.

The gradient of the loss function in Eq.~\ref{gradient} also consists of two components, $\nabla_{\theta}u$ and $(-\nabla_{\theta}\delta)$. $\nabla_{\theta}u$ represents the optimization direction of the gradient in DPO. Intuitively, $\nabla_{\theta}u$ increases the likelihood of preferred completions ${y}_w$ and decreases the likelihood of dispreferred completions ${y}_l$. While $
(-\nabla_{\theta}\delta)$ 
% $= \nabla_{\theta}D_{\mathrm{SeqKL}}\left({x},{y}_w;\pi_{\mathrm{ref}}\| \pi_{\theta}\right) - \nabla_{\theta}D_{\mathrm{SeqKL}}\left({x},{y}_l;\pi_{\mathrm{ref}}\| \pi_{\theta}\right)$ 
tends to narrow the gap between $D_{\mathrm{SeqKL}}\left({x},{y}_w;\pi_{\mathrm{ref}}\| \pi_{\theta}\right)$ and $D_{\mathrm{SeqKL}}\left({x},{y}_l;\pi_{\mathrm{ref}}\| \pi_{\theta}\right)$. 

However, when considered separately, the gradient of $D_{\mathrm{SeqKL}}\left({x},{y}_w;\pi_{\mathrm{ref}}| \pi_{\theta}\right)$ in the loss function tends to increase the sequential KL divergence between $\pi_{\mathrm{ref}}$ and $\pi_{\theta}$ at $({x}, {y}_w)$ during the optimization process. This is because the sequential forward KL divergence in the loss function is introduced through the state-value function $V_{\pi}$, inherently introducing an expectation $\mathbb{E}_{z\sim \pi_{\mathrm{ref}}}\left[\log\frac{\pi_\theta(z|[{x}, y^{<t}])}{\pi_{\mathrm{ref}}(z|[{x}, y^{<t}])}\right]$ as a baseline at each token. The negative value of this expectation corresponds precisely to a forward KL divergence $D_{\mathrm{KL}}\left(\pi_{\mathrm{ref}}(\cdot|[{x},y^{<t}]) |\pi_{\theta}(\cdot|[{x},y^{<t}])\right)$, which can be used to constrain the unbalanced growth of KL divergence. For the prompt ${x}$ and the preferred response ${y}_w$, at each token, the loss function in Eq.~\ref{gradient} tends to increase the likelihood of $\log\frac{\pi(y_w^{t}|[{x}, y_w^{<t}])}{\pi_{\mathrm{ref}}(y_w^{t}|[{x}, y_w^{<t}])}$ while simultaneously decreasing the expectation, enlarging the gap between the specified term $y_w^t$ and the baseline to expedite training. The impact of decreasing the expectation is an increase in the forward KL divergence $D_{\mathrm{KL}}\left(\pi_{\mathrm{ref}}(\cdot|[{x},y_w^{<t}]) |\pi_{\theta}(\cdot|[{x},y_w^{<t}])\right)$ at each token, leading to an increase in $D_{\mathrm{SeqKL}}\left({x},{y}_w;\pi_{\mathrm{ref}}| \pi_{\theta}\right)$. As we do not aim to accelerate the training speed and prefer to ensure training stability, we modify the loss function by discontinuing the gradient propagation of $D_{\mathrm{SeqKL}}\left({x},{y}_w;\pi_{\mathrm{ref}}| \pi_{\theta}\right)$ and treating it as a baseline term for alignment of $D_{\mathrm{SeqKL}}\left({x},{y}_l;\pi_{\mathrm{ref}}| \pi_{\theta}\right)$. 

Different from $D_{\mathrm{SeqKL}}\left({x},{y}_w;\pi_{\mathrm{ref}}| \pi_{\theta}\right)$, the gradient of $D_{\mathrm{SeqKL}}\left({x},{y}_l;\pi_{\mathrm{ref}}| \pi_{\theta}\right)$ tends to reduce the sequential KL divergence between $\pi_{\mathrm{ref}}$ and $\pi_{\theta}$ at $({x}, {y}_l)$. For the prompt ${x}$ and the rejected response ${y}_l$, the loss function in Eq.\ref{gradient} tends to decrease the likelihood of $\log\frac{\pi(y_l^{t}|[{x}, y_l^{<t}])}{\pi_{\mathrm{ref}}(y_l^{t}|[{x}, y_l^{<t}])}$ at each token while increasing the expectation $\mathbb{E}_{z\sim \pi_{\mathrm{ref}}}\left[\log\frac{\pi_\theta(z|[{x}, y_l^{<t}])}{\pi_{\mathrm{ref}}(z|[{x}, y_l^{<t}])}\right]$. The increase in the expectation 
% $\mathbb{E}_{z\sim \pi{\mathrm{ref}}}\left[\log\frac{\pi_\theta(z|[{x}, y_l^{<t}])}{\pi_{\mathrm{ref}}(z|[{x}, y_l^{<t}])}\right]$ 
implies a smaller forward KL divergence at that token, thereby acting to constrain the growth rate of sequential forward KL divergence. Therefore, for this term, we choose to retain its gradient updates.

In conclusion, we only propagate the gradient of the $D_{\mathrm{SeqKL}}\left({x},{y}_l;\pi_{\mathrm{ref}}| \pi_{\theta}\right)$ in $(-\nabla_{\theta}\delta)$. 
% Combined with the influence of the second part weight factor $(-\delta)$, 
When the second part weight factor $\delta$ becomes larger, it imposes a stronger suppression on $D_{\mathrm{SeqKL}}\left({x},{y}_l;\pi_{\mathrm{ref}}\| \pi_{\theta}\right)$ to control the balance of KL divergence.

Furthermore, to achieve a better balance between alignment performance and generation diversity in \methodabb{}, we introduce an additional parameter $\alpha$ into the loss function. By adjusting the magnitude of $\alpha$, we can control the deviation between $D_{\mathrm{SeqKL}}\left({x},{y}_w;\pi_{\mathrm{ref}}\| \pi_{\theta}\right)$ and $D_{\mathrm{SeqKL}}\left({x},{y}_l;\pi_{\mathrm{ref}}\| \pi_{\theta}\right)$.

% \vskip 0.1in
In summary, we modify the loss function of $\mathrm{\methodabb{}}_1$, resulting in the second version of our method, $\mathrm{\methodabb{}}_2$, as follows:
\begin{equation} 
    \begin{aligned}
    &\mathcal{L}_{\mathrm{\methodabb{}_2}}\left(\pi_\theta ;\pi_{\mathrm{ref}}\right)= \\
        &\ -\mathbb{E}_{({x},{y}_w,{y}_l)\sim \mathcal{D}}\left[\log\sigma\left(u({x}, {y}_w, {y}_l) - \alpha \delta_2({x}, {y}_w, {y}_l)\right)\right],
    \end{aligned}\label{tdpo2}
\end{equation}
where $\alpha$ is a parameter, and
\begin{equation}
\begin{aligned}
    \delta_2({x}, {y}_1, {y}_2) =& \beta D_{\mathrm{SeqKL}}\left({x},{y}_2;\pi_{\mathrm{ref}}\| \pi_{\theta}\right)\\
    &\qquad -\mathnormal{sg} \left(\beta D_{\mathrm{SeqKL}}\left({x},{y}_1;\pi_{\mathrm{ref}}\| \pi_{\theta}\right)\right).
\end{aligned}\label{delta2_function}
\end{equation}
The $\mathnormal{sg}$ represents the stop-gradient operator, which blocks the propagation of gradients.

We summarize the comparison of the loss functions for $\mathrm{DPO}$, $\mathrm{TDPO}_1$, and $\mathrm{TDPO}_2$, as presented in \cref{figs:tdpo}.

\begin{figure}[ht]
\vskip 0.2in
\begin{center}
\centerline{\includegraphics[width=0.85\columnwidth]{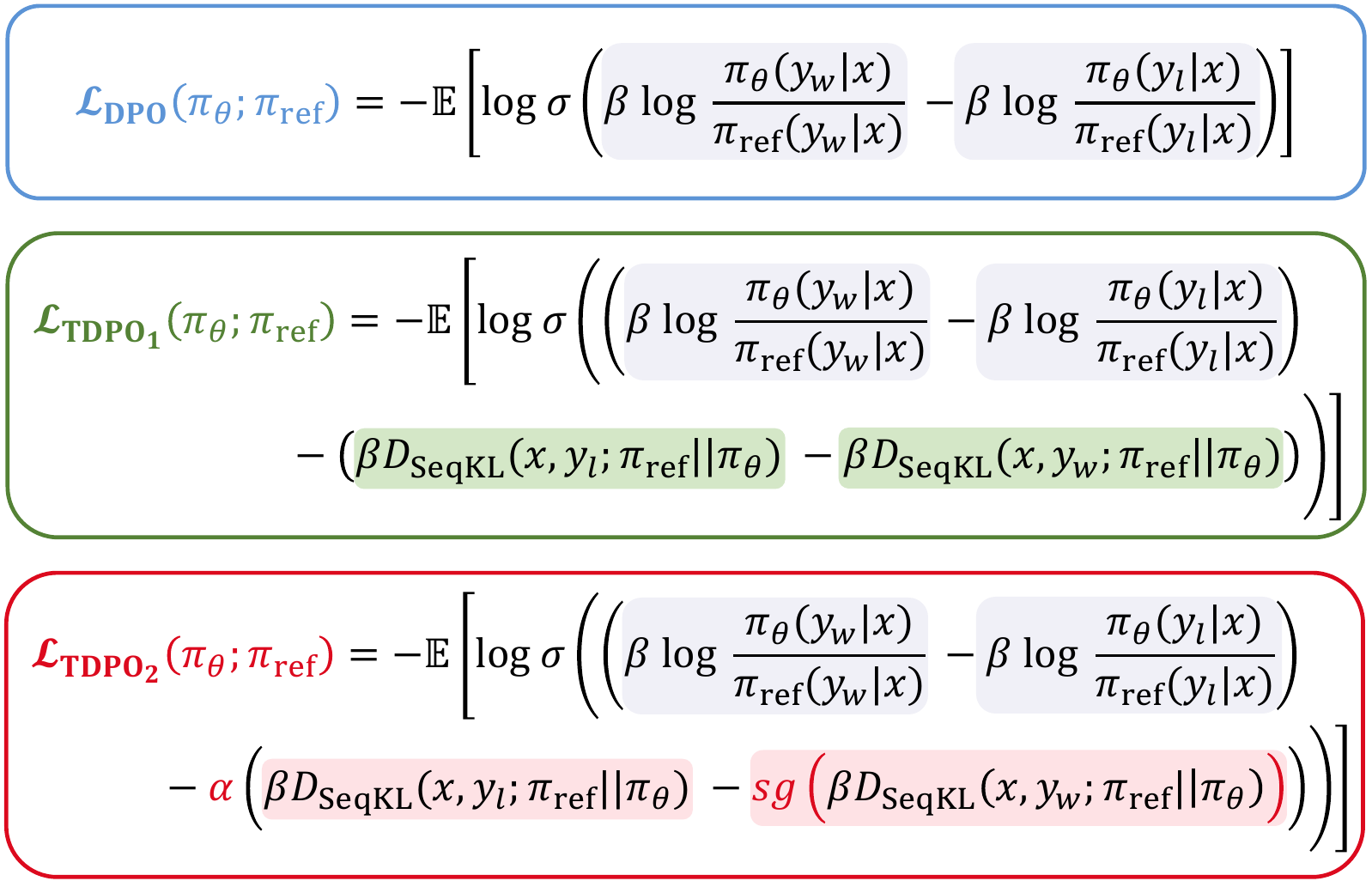}}
\caption{Comparison of Loss Functions for $\mathrm{DPO}$, $\mathrm{TDPO}_1$ and $\mathrm{TDPO}_2$ Methods. The $\mathnormal{sg}$ denotes the stop-gradient operator. Both $\mathrm{TDPO}_1$ and $\mathrm{TDPO}_2$ incorporate an additional term for finer-grained control over the KL divergence, compared to DPO.}
\label{figs:tdpo}
\end{center}
\vskip -0.1in
\end{figure}

\begin{algorithm*}[th]
   \caption{Token-level Direct Preference Optimization (TDPO)}
   \label{alg_tdpo}
\begin{algorithmic}[1]
   \STATE {\bfseries Input:} Reference model $\pi_{\mathrm{ref}}$, Policy model $\pi_{\theta}$, Coefficient $\alpha,\beta$, Learning rate $\eta$
   \STATE {\bfseries Input:} Dataset $\mathcal{D}=\{(x,y_w,y_l)^{i}\}_{i=1}^N$ of size $N$, Method $\mathcal{M}$
   \STATE {\bfseries Initialize:} $\pi_{\theta}\gets\pi_{\mathrm{ref}}$
   \FOR{each epoch}
   \STATE Sample mini-batch $\mathcal{D}_m=\{(x,y_w,y_l)^{m}\}_{m=1}^M$ from $\mathcal{D}$
   \STATE Predict the probabilities $\pi_{\theta}(y_w|x)$ and $\pi_{\theta}(y_l|x)$ for $(x, y_w, y_l)$ in the mini-batch $\mathcal{D}_m$ using the policy model
   \STATE Predict the probabilities $\pi_{\mathrm{ref}}(y_w|x)$ and $\pi_{\mathrm{ref}}(y_l|x)$ for $(x, y_w, y_l)$ in the mini-batch $\mathcal{D}_m$ using the reference model
    \STATE Calculate the function $u({x}, {y}_w, {y}_l)=\beta\log\frac{\pi_{\theta}({y}_w\mid {x})}{\pi_{\mathrm{ref}}({y}_w\mid {x})}-\beta\log\frac{\pi_{\theta}({y}_l\mid {x})}{\pi_{\mathrm{ref}}({y}_l\mid {x})}$ 
    \hfill $\triangleright$ Eq.\ref{u_function}
    \STATE Compute the sequential KL divergence $D_{\mathrm{SeqKL}}\left({x},{y}_w;\pi_{\mathrm{ref}}\| \pi_{\theta}\right)$ for $(x, y_w)$ in the mini-batch $\mathcal{D}_m$ 
    \STATE Compute the sequential KL divergence $D_{\mathrm{SeqKL}}\left({x},{y}_l;\pi_{\mathrm{ref}}\| \pi_{\theta}\right)$ for $(x, y_l)$ in the mini-batch $\mathcal{D}_m$ 
    % \STATE \textcolor{blue}{// For the $\mathrm{\methodabb{}}_1$}
    \IF{Method $\mathcal{M}$ is $\mathrm{\methodabb{}}_1$}
    \STATE Calculate the function $\delta({x}, {y}_w, {y}_l) =\beta D_{\mathrm{SeqKL}}\left({x},{y}_l;\pi_{\mathrm{ref}}\| \pi_{\theta}\right) - \beta D_{\mathrm{SeqKL}}\left({x},{y}_w;\pi_{\mathrm{ref}}\| \pi_{\theta}\right)$ \hfill $\triangleright$ Eq.\ref{delta_function}
    \STATE $\theta \gets \theta + \eta \nabla_{\theta}\mathbb{E}_{({x},{y}_w,{y}_l)\sim \mathcal{D}_m}\left[\log\sigma\left(u({x}, {y}_w, {y}_l) - \delta({x}, {y}_w, {y}_l)\right)\right]$ \hfill $\triangleright$ Eq.\ref{eq_6}
    \ELSIF{Method $\mathcal{M}$ is $\mathrm{\methodabb{}}_2$}
    % \STATE \textcolor{blue}{// For the $\mathrm{\methodabb{}}_2$}
    \STATE Calculate the function $\delta_2({x}, {y}_w, {y}_l) =\beta D_{\mathrm{SeqKL}}\left({x},{y}_l;\pi_{\mathrm{ref}}\| \pi_{\theta}\right)-\mathnormal{sg} \left(\beta D_{\mathrm{SeqKL}}\left({x},{y}_w;\pi_{\mathrm{ref}}\| \pi_{\theta}\right)\right)$ \hfill $\triangleright$ Eq.\ref{delta2_function} 
    \STATE $\theta \gets \theta + \eta \nabla_{\theta}\mathbb{E}_{({x},{y}_w,{y}_l)\sim \mathcal{D}_m}\left[\log\sigma\left(u({x}, {y}_w, {y}_l) - \alpha \delta_2({x}, {y}_w, {y}_l)\right)\right]$ \hfill $\triangleright$ Eq.\ref{tdpo2}
    \ENDIF
   \ENDFOR
   \STATE {\bfseries Output:} $\pi_{\theta}$
\end{algorithmic}
% \vskip -0.2in
\end{algorithm*}

\begin{figure*}[ht]
\vskip 0.2in
\centering
\subfigure[\label{IMDb:subfig1}]{\includegraphics[width=0.38\textwidth]{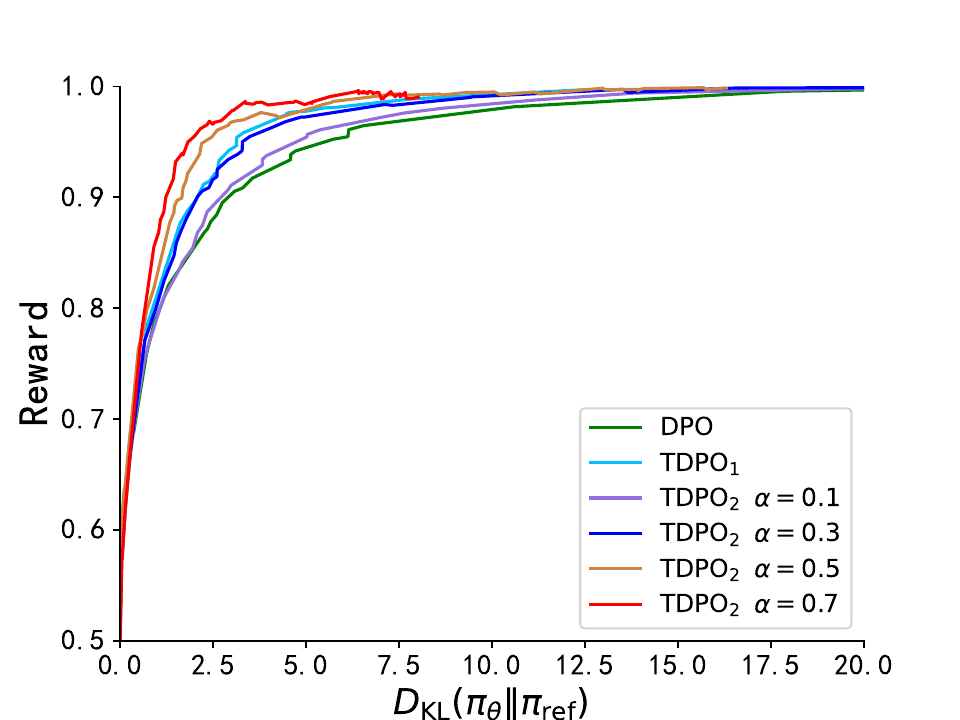}}
\subfigure[\label{IMDb:subfig2}]{\includegraphics[width=0.38\textwidth]{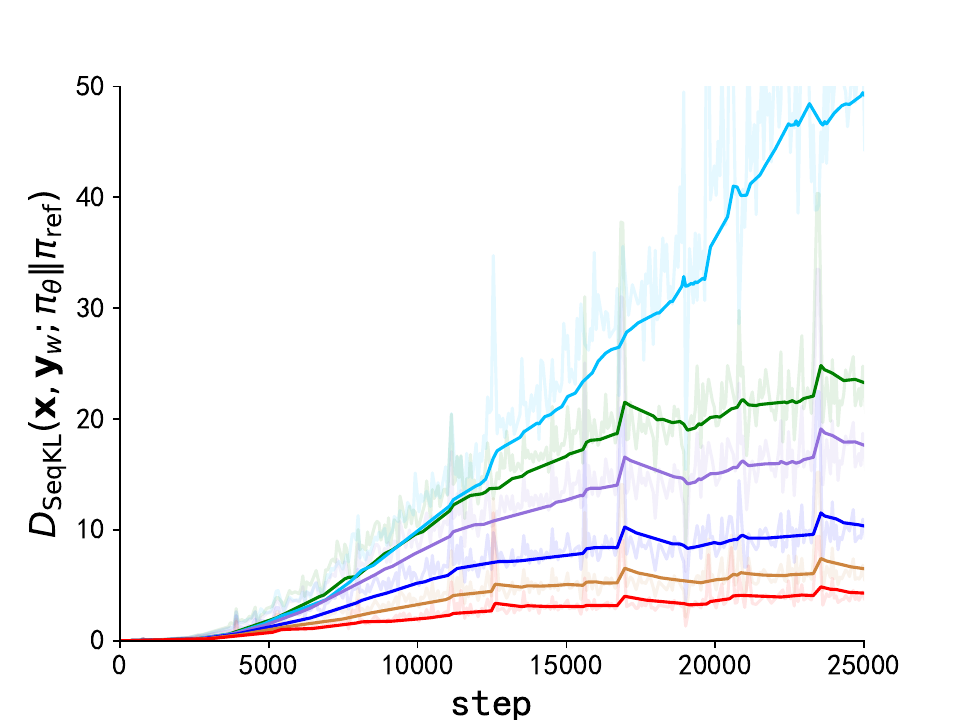}}
% \hspace{5pt}
\subfigure[\label{IMDb:subfig3}]{\includegraphics[width=0.38\textwidth]{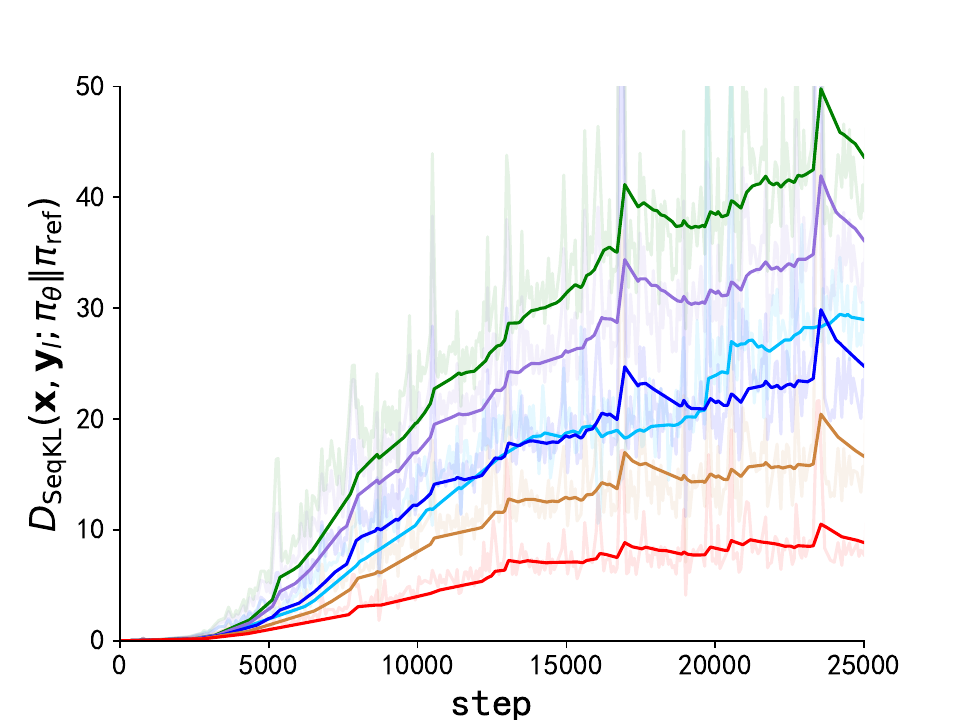}}
% \hspace{-5pt}
\subfigure[\label{IMDb:subfig4}]{\includegraphics[width=0.38\textwidth]{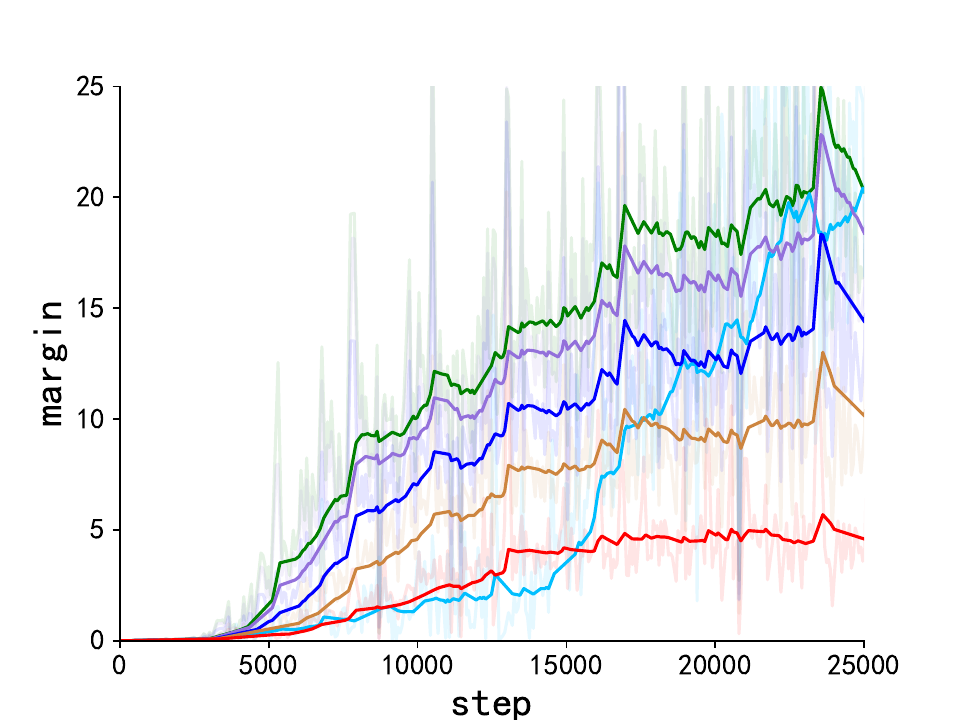}}
  \caption{The experiment on IMDb dataset. \cref{IMDb:subfig1} represents the frontier of expected reward and KL divergence with respect to the reference model. We implemented DPO, $\mathrm{\methodabb{}}_1$, and different versions of $\mathrm{\methodabb{}}_2$ with respect to the parameter $\alpha$. Both $\mathrm{\methodabb{}}_1$ and $\mathrm{\methodabb{}}_2$ outperform DPO in terms of the frontier, with $\mathrm{\methodabb{}}_2$ showing further improvement over $\mathrm{\methodabb{}}_1$. This demonstrates the effectiveness of our analysis and modifications. \cref{IMDb:subfig2} and \cref{IMDb:subfig3} present the progression of sequential KL divergence on the preferred and dispreferred responses subset over training steps respectively. \cref{IMDb:subfig4} illustrates the difference between the sequential KL divergence on the dispreferred responses subset and that on the preferred responses subset throughout the training process, namely $margin=|D_{\mathrm{SeqKL}}({x}, {y}_w;\pi_{\mathrm{ref}}\|\pi_{\theta}) - D_{\mathrm{SeqKL}}({x}, {y}_l;\pi_{\mathrm{ref}}\|\pi_{\theta})|$. $\mathrm{\methodabb{}}_2$ exhibit superior regulation over KL divergence compared to the $\mathrm{\methodabb{}}_1$ and DPO algorithm.}
  \label{fig2}
\vskip -0.1in
\end{figure*}

Leveraging the parameter $\beta$ to regulate the deviation of the language model from the base reference model, and $\alpha$ to control the balance of sequential KL divergence within the language model, our approach achieves superior alignment with human preferences while preserving model generation diversity effectively.  
We provided the pseudocode in \cref{alg_tdpo} and the Pytorch implementation version of TDPO loss in \cref{appendix_B}.
% The entire pipeline and implementation details can be found in \cref{appendix_B}.

\section{Experiments}
In this section, we demonstrate the superior performance of our algorithm in three different open-sourced datasets: the IMDb sentiment dataset \cite{maas2011learning}, the Anthropic HH dataset \cite{bai2022training}, and MT-bench \cite{zheng2023judging}. The IMDb dataset serves as a controlled semantic generation dataset where the model is presented with prompts consisting of prefixes from movie reviews, and required to generate responses with positive sentiment. The Anthropic HH dataset is a single-turn dialogue dataset where the model receives human queries, covering various topics such as academic questions or life guidance. The trained model is tasked with providing helpful answers to these questions while avoiding toxic responses. 
Finally, MT-Bench is a GPT-4-based evaluation benchmark, assessing the proficiency of LLMs in handling multi-turn open-ended questions. 
Questions in MT-Bench span eight distinct knowledge domains, from areas such as writing, mathematical reasoning, and humanities. Experimental results demonstrate that MT-Bench achieves consistency with human preferences exceeding 80\%. 

\subsection{Experiments on IMDb Dataset}
In this experiment, besides our proposed methods $\mathrm{\methodabb{}}_1$ and $\mathrm{\methodabb{}}_2$, we also implemented the DPO algorithm for fair comparison. We employed GPT-2 Large \cite{radford2019language} as our base model and the model checkpoint: \textit{insub/gpt2-large-IMDb-fine-tuned}\footnote{\url{https://huggingface.co/insub/gpt2-large-IMDb-fine-tuned}} as the SFT model. 
During the evaluation, we utilized the pre-trained sentiment classifier \textit{siebert/sentiment-roberta-large-english}\footnote{\url{https://huggingface.co/siebert/sentiment-roberta-large-english}} to compute rewards. 
% To analyze the effectiveness of each algorithm in optimizing the constrained reward maximization objective, we evaluated each algorithm by its frontier of reward and KL divergence from a reference policy. 
For DPO, we followed the official implementation \cite{rafailov2023direct}, setting $\beta$ at 0.1. 
% For PPO, we employed the trlx framework \cite{havrilla2023trlx}, utilizing the ground-truth reward model during training. 
To analyze the effectiveness of each algorithm in optimizing the constrained reward maximization objective, we evaluated each algorithm after 100 training steps until convergence, computing its frontier of average reward and average sequential KL divergence with the reference policy.

The results are depicted in \cref{IMDb:subfig1}. We implement the DPO, $\mathrm{\methodabb{}}_1$, and different versions of $\mathrm{\methodabb{}}_2$ algorithms with varying the parameter $\alpha$. From the figure, we notice that although DPO establishes an efficient frontier, $\mathrm{\methodabb{}}_1$ and $\mathrm{\methodabb{}}_2$ outperform DPO in terms of divergence versus reward on the frontier, achieving higher reward while maintaining low KL divergence. We also implemented versions of $\mathrm{\methodabb{}}_2$ with $\alpha \in \{1, 1.5, 2, 5\}$. However, we found that higher values of $\alpha$ made it difficult to optimize the reward. In \cref{IMDb:subfig2,IMDb:subfig3,IMDb:subfig4}, we illustrate the curves portraying the sequential KL divergence for different algorithms during the training process. The sequential KL divergence growth rate of DPO on the dispreferred response subset is significantly higher than that on the preferred response subset, leading to an increasing offset between them. In contrast, $\mathrm{\methodabb{}}_2$ exhibits superior control over KL divergence, achieving better divergence efficiency compared to DPO. As analyzed in \cref{Loss Function}, $\mathrm{\methodabb{}}_1$ tends to result in an increased sequential KL divergence on the preferred response subset, thereby exhibiting a weaker capacity for KL divergence adjustment compared to $\mathrm{\methodabb{}}_2$. $\mathrm{\methodabb{}}_2$ maintains a more balanced sequential KL divergence on both dispreferred and preferred response subsets, contributing to its ability to achieve a superior frontier. Although a larger $\alpha$ enhances control over the sequential KL divergence, it also affects the speed and difficulty of optimization. For the remainder of this paper, we set $\alpha=0.5$. In \cref{appendix_C}, we also present graphs of the frontier between the reward and forward KL divergence and the progression curves of the forward KL divergence throughout the training process.

\subsection{Experiments on Anthropic HH Dataset}
\begin{table}[t]

\caption{Comparison of 
% SFT, PPO, 
DPO, $\mathrm{\methodabb{}}_1$ and $\mathrm{\methodabb{}}_2$ in terms of the trade-off between Alignment (accuracy) and Diversity (entropy) on the Anthropic HH dataset. The $\uparrow$ indicates higher values are preferable.}
\vskip 0.2in
\begin{center}
\begin{small}
% \begin{sc}
\begin{tabular}{c|c|c}
\toprule
\multirow{2}{*}{\textbf{Method}} & \textbf{Alignment} &{\textbf{Diversity}} \\
&\text{Accuracy}$(\%)$ $\uparrow$ &\text{Entropy} $\uparrow$\\
\midrule
% \text{PPO}&\\
\text{f-DPO(FKL)}& 54.71 & 4.708\\
\text{DPO}& 59.43 & 3.196\\
\text{$\mathrm{\methodabb{}}_1$(\textbf{ours})}& 60.08 & 4.727 \\
\text{$\mathrm{\methodabb{}}_2$(\textbf{ours})} & \textbf{67.33} & \textbf{4.915}\\
\bottomrule
\end{tabular}
% \end{sc}
\end{small}
\end{center}
\label{table1}
\vskip -0.2in
\end{table}
Next, we evaluate the performance of $\mathrm{\methodabb{}}_1$ and  $\mathrm{\methodabb{}}_2$ on the Anthropic HH dataset. We use Pythia-2.8B \cite{biderman2023pythia} as the base model and fine-tune the base model on chosen completions to train a reference model, such that completions are within the distribution of the model. Subsequently, we train $\mathrm{\methodabb{}}_1$,  $\mathrm{\methodabb{}}_2$, DPO \cite{rafailov2023direct} and f-DPO with forward KL divergence constraint \cite{wang2023beyond}
% , and PPO \cite{havrilla2023trlx} 
on this reference model. 
% For PPO, we employ the proxy reward model \textit{Dahoas/gptj-rm-static}\footnote{\url{https://huggingface.co/Dahoas/gptj-rm-static}}. 
In this experiment, our primary focus is on two aspects: 1) the trade-off between alignment and diversity in generating responses among different algorithms, and 2) the ability of different algorithms to align with human preferences. For the first part, we utilize automatic metrics for evaluation, while for the second part, we rely on the GPT-4 evaluation. Both evaluations were conducted on the test set of the Anthropic HH dataset.

To assess the alignment performance of different algorithms in generating responses, we compute the accuracy of generated responses relative to chosen completions in the test dataset. To measure the diversity, we employ nucleus sampling with $p=0.95$ to generate 25 responses and utilize the predictive entropy 
% self-BLEU \cite{zhu2018texygen}, and distinct-n \cite{li2015diversity} 
as the evaluation metric. The trade-off between alignment accuracy and diversity for different algorithms is summarized in \cref{table1}. $\mathrm{\methodabb{}}_2$ not only surpasses DPO, f-DPO and $\mathrm{\methodabb{}}_1$ in terms of accuracy but also excels in entropy, achieving a superior balance between alignment and diversity.

To further assess the ability of $\mathrm{\methodabb{}}_1$ and $\mathrm{\methodabb{}}_2$ to align with human preferences, we evaluated the win rates of responses generated by models trained with different algorithms against chosen responses on the test set of the HH dataset, the result is illustrated in the \cref{hh_win_rate}. Compared to the SFT model, the DPO, $\mathrm{\methodabb{}}_1$, and $\mathrm{\methodabb{}}_2$ algorithms better align with human preferences, achieving win rates not less than $50\%$ against chosen responses at temperature $0.75$. This demonstrates that both $\mathrm{\methodabb{}}_1$, and $\mathrm{\methodabb{}}_2$ possess a strong capability to align with human preferences.

\begin{figure}[ht]
% \vskip 0.2in
\begin{center}
% \centerline{\includegraphics[width=0.8\columnwidth]{figs/hh/hh.pdf}}
\centerline{\includegraphics[width=0.8\columnwidth]{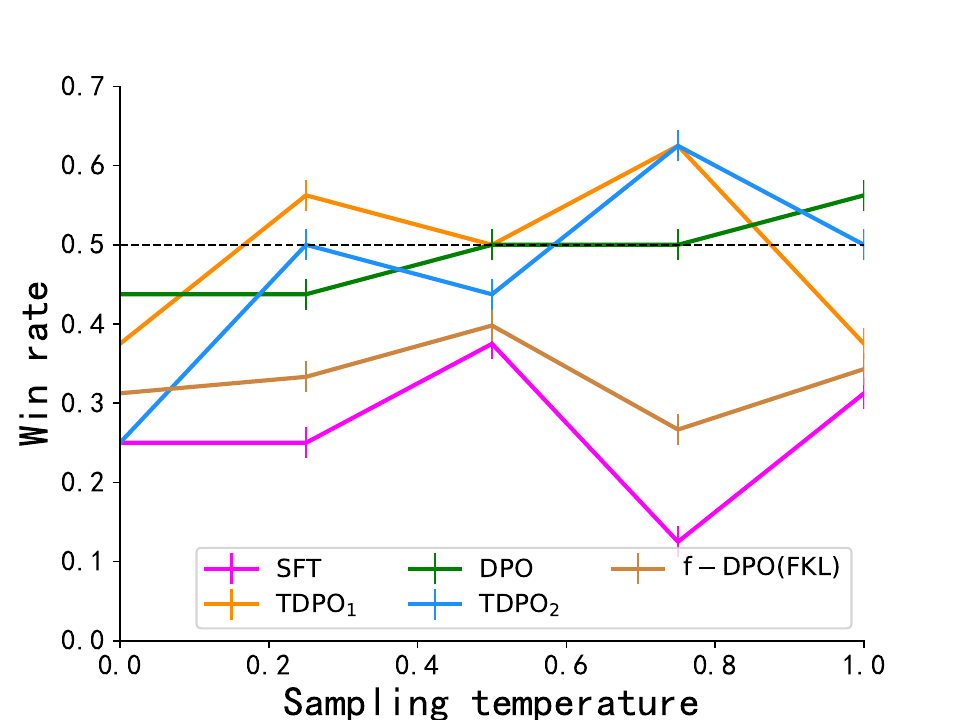}}
\caption{The win rates, computed by GPT-4, in comparison to the chosen responses for Anthropic-HH one-step dialogue.}
\label{hh_win_rate}
\end{center}
\vskip -0.2in
\end{figure}

\subsection{Experiments on MT-Bench}
To comprehensively evaluate $\mathrm{\methodabb{}}_1$, and $\mathrm{\methodabb{}}_2$ in terms of generation quality, we conducted pairwise comparisons on the MT-Bench using models trained on the Anthropic HH dataset. Following the official MT-Bench implementation, we sampled responses with a temperature coefficient of $0.7$ and constrained the maximum number of newly generated tokens to 512. For the PPO baseline, we employed the trlx framework \cite{havrilla2023trlx}, utilizing the proxy reward model \textit{Dahoas/gptj-rm-static}\footnote{\url{https://huggingface.co/Dahoas/gptj-rm-static}} during training. The result is depicted in the \cref{mt_bench}. It reveals that $\mathrm{TDPO}_2$ achieves a higher win rate compared to other algorithms, indicating its ability to assist LLMs in generating higher-quality responses. This advantage is attributed to its exceptional ability to regulate KL divergence, facilitating a better balance between performance alignment and generation diversity.

\begin{figure}[ht]
% \vskip 0.2in
\begin{center}
\centerline{\includegraphics[width=0.85\columnwidth]{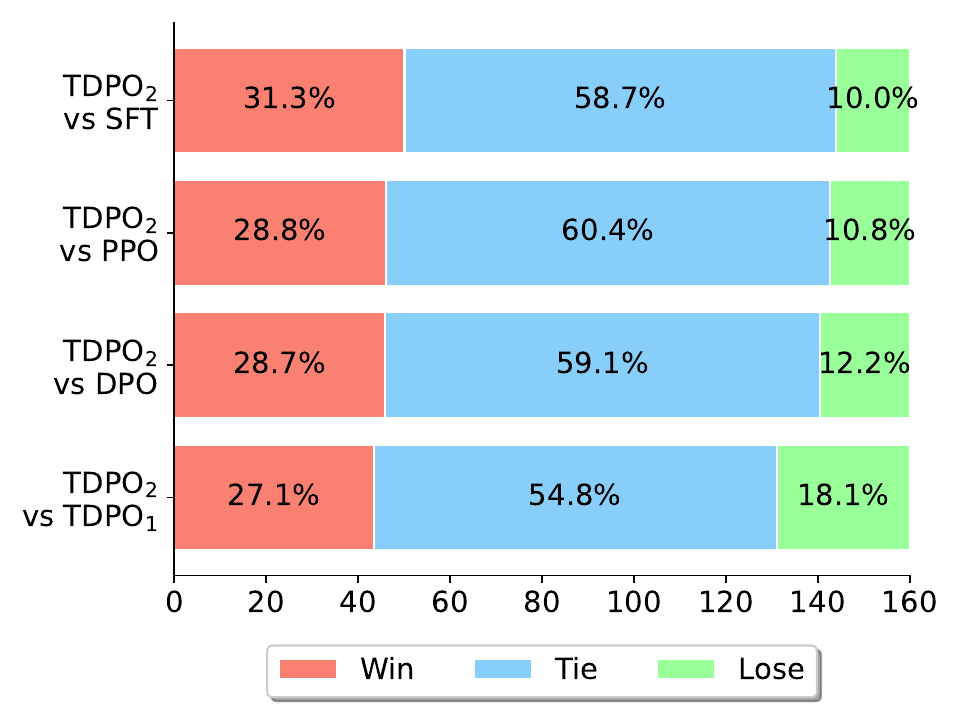}}
\caption{MT-Bench comparison between $\mathrm{SFT}$, $\mathrm{PPO}$, $\mathrm{DPO}$, $\mathrm{TDPO}_1$ and $\mathrm{TDPO}_2$ methods. The win, tie, and lose rates are evaluated based on GPT-4.}
\label{mt_bench}
\end{center}
\vskip -0.2in
\end{figure}

\section{Conclusion}
% In this paper, we introduce a novel alignment method, named \methodname{} (\methodabb{}), which optimizes the alignment of LLMs with human preferences from a token-level perspective. Our motivation primarily stems from observing the progressively increasing imbalance between preferred and dispreferred response subsets during the training process of the DPO algorithm, which affects the divergence efficiency of the trained model. Previous sentence-level modeling approaches were insufficient to address this issue, hence we introduce token-level modeling and analysis. Experiments demonstrate that our method can more effectively regulate changes in KL divergence and improve the efficiency of KL divergence, thereby achieving a better frontier between alignment and diversity.
In this work, we introduced \methodname{} (\methodabb{}), an innovative token-level fine-tuning approach for Large Language Models (LLMs) aimed at aligning more closely with human preferences. By employing the token-wise optimization with forward KL divergence constraints and converting the Bradley-Terry model into a token-level preference model, \methodabb{} addresses key challenges in divergence efficiency and content diversity, surpassing traditional methods like Direct Preference Optimization (DPO) and PPO-based RLHF in tasks such as controlled sentiment generation and single-turn dialogues. This marks a substantial advancement in LLM training methodologies, demonstrating the potential of token-level optimization to enhance the alignment, quality, and diversity of LLM outputs, setting a new direction for AI alignment research and the development of nuanced, human-aligned AI systems.

% Regarding the future prospects of alignment methodologies, we anticipate that iterative enhancement approaches and multi-turn conversation alignment strategies will significantly improve the alignment of LLMs with human values. 
% By continually refining these models, we can achieve a more precise alignment with complex human preferences. Additionally, multi-turn conversations enable deeper and more nuanced interactions, fostering comprehensive alignment with human intentions. This dual approach aims to improve the quality and relevance of AI responses and make AI systems more attuned to human values and expectations.

Regarding the future prospects of alignment methodologies, we anticipate that iterative refinement approaches and multi-turn conversational alignment strategies will significantly improve the alignment of large language models with human values. By continuously refining these models, we can achieve more precise alignment with complex human preferences. Moreover, multi-turn conversations enable deeper and more nuanced interactions, fostering comprehensive attunement to human intentions. These approaches aim to enhance the quality and relevance of AI responses, making AI systems more harmonized with human values and expectations.

% \section*{Accessibility}
% Authors are kindly asked to make their submissions as accessible as possible for everyone including people with disabilities and sensory or neurological differences.
% Tips of how to achieve this and what to pay attention to will be provided on the conference website \url{http://icml.cc/}.

% \section*{Software and Data}

% If a paper is accepted, we strongly encourage the publication of software and data with the
% camera-ready version of the paper whenever appropriate. This can be
% done by including a URL in the camera-ready copy. However, \textbf{do not}
% include URLs that reveal your institution or identity in your
% submission for review. Instead, provide an anonymous URL or upload
% the material as ``Supplementary Material'' into the OpenReview reviewing
% system. Note that reviewers are not required to look at this material
% when writing their review.

% Acknowledgements should only appear in the accepted version.
\section*{Acknowledgements}
The research leading to these results received funding from National Key R\&D Program of China (2022ZD0116402). In addition, it received funding from Science and Technology Research and Development Project of China State Railway Group Corporation Limited (P2022X012).

\section*{Impact Statement}
This paper presents work whose goal is to advance the field of Machine Learning. There are many potential societal consequences of our work, none which we feel must be specifically highlighted here.

% In the unusual situation where you want a paper to appear in the
% references without citing it in the main text, use \nocite
\nocite{langley00}

\bibliography{example_paper}
\bibliographystyle{icml2024}

%%%%%%%%%%%%%%%%%%%%%%%%%%%%%%%%%%%%%%%%%%%%%%%%%%%%%%%%%%%%%%%%%%%%%%%%%%%%%%%
%%%%%%%%%%%%%%%%%%%%%%%%%%%%%%%%%%%%%%%%%%%%%%%%%%%%%%%%%%%%%%%%%%%%%%%%%%%%%%%
% APPENDIX
%%%%%%%%%%%%%%%%%%%%%%%%%%%%%%%%%%%%%%%%%%%%%%%%%%%%%%%%%%%%%%%%%%%%%%%%%%%%%%%
%%%%%%%%%%%%%%%%%%%%%%%%%%%%%%%%%%%%%%%%%%%%%%%%%%%%%%%%%%%%%%%%%%%%%%%%%%%%%%%
\newpage
\appendix
\onecolumn
\section{Mathematical Derivations}
\subsection{Proving the Relationship between Maximizing the Advantage Function and Enhancing the Expected Returns} \label{A_2}
\mylemmatwo*
\begin{proof}
% Let trajectory $\tau:= (s_1, a_1, s_2, a_2,...)$, and the notation $E_{\tau |\Tilde{\pi}}[\cdot]$ indicates that actions are sampled from $\Tilde{\pi}$ to generate $\tau$. So we can get
Let trajectory $\tau:= (x, y^1, y^2, ...)$, and the notation $E_{\tau |\Tilde{\pi}}[\cdot]$ indicates that actions are
sampled from $\Tilde{\pi}$ to generate $\tau$. So we can get
% \begin{align}
% &\mathbb{E}_{s_1}\left[V_{\Tilde{\pi}}(s_1)\right] - \mathbb{E}_{s_1}\left[V_{\pi}(s_1)\right]\\
% =&\mathbb{E}_{s_1}\left[\mathbb{E}_{\Tilde{\pi}}\left[\sum_{t=1}^{\infty}\gamma^{t-1} R_{t}-V_{\pi}(s_1)\bigg|s_{1}=[{x},y^{<1}]\right]\right]\\
% =&\mathbb{E}_{s_1}\left[\mathbb{E}_{\Tilde{\pi}}\left[\sum_{t=1}^{\infty}\gamma^{t-1} \left(R_{t}+\gamma V_{\pi}(s_{t+1})-V_{\pi}(s_{t})\right)\bigg|s_{1}=[{x},y^{<1}]\right]\right]\\
% =&\mathbb{E}_{s_1}\left[\mathbb{E}_{\Tilde{\pi}}\left[\sum_{t=1}^{\infty}\gamma^{t-1} A_{\pi}(s_t, a_t)\bigg|s_{1}=[{x},y^{<1}]\right]\right]
% \end{align}
\begin{align}
&\mathbb{E}_{x\sim \mathcal{D}}\left[V_{\Tilde{\pi}}([x])\right] - \mathbb{E}_{x\sim \mathcal{D}}\left[V_{\pi}([x])\right]\\
=&\mathbb{E}_{\tau|\Tilde{\pi}}\left[\sum_{t=1}^{\infty}\gamma^{t-1} R_{t}-V_{\pi}([x])\right]\\
=&\mathbb{E}_{\tau|\Tilde{\pi}}\left[\sum_{t=1}^{\infty}\gamma^{t-1} \left(R_{t}+\gamma V_{\pi}([x, y^{<t+1}])-V_{\pi}([x, y^{<t}])\right)\right]\\
=&\mathbb{E}_{\tau|\Tilde{\pi}}\left[\sum_{t=1}^{\infty}\gamma^{t-1} A_{\pi}([x, y^{<t}], y^t)\right]\\
=&\mathbb{E}_{\tau|\Tilde{\pi}}\left[\sum_{t=1}^{\infty}\gamma^{t-1} \mathbb{E}_{y^t \sim \Tilde{\pi}}\left[A_{\pi}([x, y^{<t}], y^t)\right]\right]\label{eq_v_A}
\end{align}
Since for any state $s_t=[{x}, y^{<t}]$, $\mathbb{E}_{z \sim \Tilde{\pi}}\left[A_{\pi}([{x}, y^{<t}], z)\right]\ge 0$, so we can obtain
\begin{align}
    \mathbb{E}_{x\sim 
    \mathcal{D}}\left[V_{\Tilde{\pi}}([x])\right] - \mathbb{E}_{x\sim 
    \mathcal{D}}\left[V_{\pi}([x])\right]
    \ge  0
\end{align}
\end{proof}
Our goal is to maximize the expected return of a parameterized policy $\pi_{\theta}$. According to Eq.\ref{eq_v_A}, what we need to do is $\max\limits_{\pi_{\theta}} \ \mathbb{E}_{{x}, {y^{<t}}\sim\mathcal{D},z\sim \pi_{\theta}(\cdot|[{x},y^{<t}])}\left[A_{\pi_{\mathrm{
ref}}}([{x},y^{<t}], z)\right]$. To prevent the excessive degradation of language models, we introduce a reverse KL divergence constraint, forming our objective function:
\begin{equation}
    \max_{\pi_{\theta}} \ \mathbb{E}_{{x}, {y}^{<t}\sim\mathcal{D},z\sim \pi_{\theta}(\cdot|[{x},y^{<t}])}\left[A_{\pi_{\mathrm{
ref}}}([{x},y^{<t}], z)-\beta D_{\mathrm{KL}}\left(\pi_{\theta}(\cdot|[{x},y^{<t}])||\pi_{\mathrm{ref}}(\cdot|[{x},y^{<t}])\right)\right]
\end{equation}

% \subsection{Deriving the Imbalance of sequential TV Divergence in the DPO algorithm}\label{A_3}
% \mylemmathree*

\subsection{Deriving the Mapping between the State-Action Function and the Optimal Policy}\label{A_4}
\begin{lemma}
    The constrained problem in Eq.~\ref{rpo_obj} has the closed-form solution:
\begin{equation}
\begin{aligned}
    \pi_{\theta}^*(z|[{x},y^{<t}])=
     \frac{\pi_{\mathrm{ref}}(z|[{x},y^{<t}])\exp\left(\frac{1}{\beta}Q_{\pi_{\mathrm{ref}}}([{x},y^{<t}],z)\right)}{Z([{x},y^{<t}];\beta)},
\end{aligned}
\end{equation}
where $Z([{x},y^{<t}];\beta) = \mathbb{E}_{z\sim \pi_{\mathrm{ref}}(\cdot|[{x},y^{<t}])}e^{\frac{1}{\beta}Q_{\pi_{\mathrm{ref}}}([{x},y^{<t}],z)}$
is the partition function.
\end{lemma}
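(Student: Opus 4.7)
The plan is to mirror the classical KL-regularized reward maximization argument used in the derivation of DPO, but applied pointwise at each state $s_t = [x, y^{<t}]$ and with the advantage function replacing the scalar reward. Because the objective in Eq.~\ref{rpo_obj} decomposes as an expectation over states, it suffices to maximize, for every fixed state $[x, y^{<t}]$, the inner functional
\begin{equation*}
J(\pi_\theta) \;=\; \mathbb{E}_{z\sim \pi_\theta(\cdot|[x,y^{<t}])}\!\left[A_{\pi_{\mathrm{ref}}}([x,y^{<t}], z)\right] - \beta\, D_{\mathrm{KL}}\!\left(\pi_\theta(\cdot|[x,y^{<t}])\,\|\,\pi_{\mathrm{ref}}(\cdot|[x,y^{<t}])\right),
\end{equation*}
subject to $\pi_\theta(\cdot|[x,y^{<t}])$ being a probability distribution on $\mathcal{Y}$.

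First I would use the identity $A_{\pi_{\mathrm{ref}}}([x,y^{<t}], z) = Q_{\pi_{\mathrm{ref}}}([x,y^{<t}], z) - V_{\pi_{\mathrm{ref}}}([x,y^{<t}])$, and observe that the $V_{\pi_{\mathrm{ref}}}([x,y^{<t}])$ term is independent of $z$ and hence also of $\pi_\theta$, so it contributes only an additive constant to $J$ and can be dropped for the purposes of maximization. Next I would combine the remaining $Q$-expectation with the KL penalty and rewrite the sum as a single expectation under $\pi_\theta$:
\begin{equation*}
\mathbb{E}_{z\sim \pi_\theta}\!\left[\tfrac{1}{\beta} Q_{\pi_{\mathrm{ref}}}([x,y^{<t}], z) - \log \tfrac{\pi_\theta(z|[x,y^{<t}])}{\pi_{\mathrm{ref}}(z|[x,y^{<t}])}\right].
\end{equation*}
Introducing the partition function $Z([x,y^{<t}];\beta) = \mathbb{E}_{z\sim \pi_{\mathrm{ref}}}\exp(\tfrac{1}{\beta} Q_{\pi_{\mathrm{ref}}}([x,y^{<t}], z))$ and the candidate distribution
\begin{equation*}
\pi^{\star}(z|[x,y^{<t}]) \;=\; \frac{\pi_{\mathrm{ref}}(z|[x,y^{<t}]) \exp\!\left(\tfrac{1}{\beta} Q_{\pi_{\mathrm{ref}}}([x,y^{<t}], z)\right)}{Z([x,y^{<t}];\beta)},
\end{equation*}
which is manifestly a valid probability distribution by construction, I would then recognize the above expectation as $-D_{\mathrm{KL}}(\pi_\theta(\cdot|[x,y^{<t}])\,\|\,\pi^{\star}(\cdot|[x,y^{<t}])) + \log Z([x,y^{<t}];\beta)$.

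The conclusion follows by Gibbs' inequality: $-D_{\mathrm{KL}}(\pi_\theta\|\pi^{\star}) \le 0$ with equality iff $\pi_\theta = \pi^{\star}$ almost everywhere, so the unique maximizer is $\pi_\theta^{\star} = \pi^{\star}$, which is exactly Eq.~\ref{eq_3}. I do not expect any genuine obstacle here — this is the standard Gibbs/variational argument, and the only thing worth being careful about is the bookkeeping that $V_{\pi_{\mathrm{ref}}}$ truly depends only on the state and not on the action, so that it cleanly separates from the $z$-dependent terms. One minor subtlety is that the outer expectation in Eq.~\ref{rpo_obj} is over prompts and prefixes sampled from $\mathcal{D}$ together with $\pi_\theta$, so strictly speaking one should note that the per-state maximizer does not depend on the sampling distribution over states (as long as every state receives positive weight on the support of interest), which licenses the pointwise optimization.
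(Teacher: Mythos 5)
Your proposal is correct and follows essentially the same route as the paper's own proof in Appendix A.4: expand $A_{\pi_{\mathrm{ref}}} = Q_{\pi_{\mathrm{ref}}} - V_{\pi_{\mathrm{ref}}}$, fold the remaining terms into a single KL divergence against the Gibbs distribution $\pi_{\mathrm{ref}}\exp(Q_{\pi_{\mathrm{ref}}}/\beta)/Z$, and conclude by nonnegativity of KL. Your extra remark that the state-sampling distribution does not affect the pointwise maximizer is a small but valid point of care that the paper leaves implicit.
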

\begin{proof}
\begin{align}
&\max_{\pi_{\theta}} \ \mathbb{E}_{z\sim \pi_{\theta}(\cdot|[{x},y^{<t}])}A_{\pi_{\mathrm{ref}}}([{x},y^{<t}], z)-\beta D_{\mathrm{KL}}\left(\pi_{\theta}(\cdot|[{x},y^{<t}])\|\pi_{\mathrm{ref}}(\cdot|[{x},y^{<t}])\right)\\
=&\max_{\pi_{\theta}}\ \mathbb{E}_{z\sim \pi_{\theta}(\cdot|[{x}, y^{<t}])}\left(\left(Q_{\pi_{\mathrm{ref}}}([{x},y^{<t}],z)-V_{\pi_{\mathrm{ref}}}([{x},y^{<t}])\right)+\beta\log \big(\frac{\pi_{\mathrm{ref}}(z|[{x}, y^{<t}])}{\pi_{\theta}(z|[{x}, y^{<t}])}\big)\right)\\
=&\max_{\pi_{\theta}}\ \beta\mathbb{E}_{z\sim \pi_{\theta}(\cdot|[{x}, y^{<t}])}\log\left(\frac{p(z|[{x},y^{<t}])e^{\frac{1}{\beta}Q_{\pi_{\mathrm{ref}}}([{x},y^{<t}],z)}}{\pi_{\theta}(z|[{x},y^{<t}])}\right)-V_{\pi_{\mathrm{ref}}}([{x}, y^{<t}])\\
=&\max_{\pi_{\theta}}\ \beta\mathbb{E}_{z\sim \pi_{\theta}(\cdot|[{x}, y^{<t}])}\log\left(\frac{\pi_{\mathrm{ref}}(z|[{x},y^{<t}])e^{\frac{1}{\beta}Q_{\pi_{\mathrm{ref}}}([{x},y^{<t}],z)}}{Z([{x}, y^{<t}];\beta)\pi_{\theta}(z|[{x},y^{<t}])}\right)-V_{\pi_{\mathrm{ref}}}([{x}, y^{<t}])+\beta\log Z([{x}, y^{<t}];\beta)\\
=&\max_{\pi_{\theta}}\  -\beta D_{\mathrm{KL}}\left(\pi_{\theta}(z|[{x}, y^{<t}])\bigg\|\frac{\pi_{\mathrm{ref}}(z|[{x},y^{<t}])e^{\frac{1}{\beta}Q_{\pi_{\mathrm{ref}}}([{x},y^{<t}],z)}}{Z([{x}, y^{<t}];\beta)}\right)-V_{\pi_{\mathrm{ref}}}([{x}, y^{<t}])+\beta\log Z([{x}, y^{<t}];\beta)
\end{align}    
where $Z([{x}, y^{<t}];\beta)$ is the partition function:
\begin{equation}
Z([{x},y^{<t}];\beta) = \mathbb{E}_{z\sim \pi_{\mathrm{ref}}(\cdot|[{x},y^{<t}])}\exp\left(\frac{1}{\beta}Q_{\pi_{\mathrm{ref}}}([{x},y^{<t}],z)\right)
\end{equation}
Based on the property of KL divergence, we can derive the relationship between the optimal policy and the state-action function:
\begin{equation}
    \pi_{\theta}^*(z|[{x},y^{<t}]) = \frac{\pi_{\mathrm{ref}}(z|[{x},y^{<t}])\exp\left(\frac{1}{\beta}Q_{\pi_{\mathrm{ref}}}([{x},y^{<t}],z)\right)}{Z([{x},y^{<t}];\beta)}
\end{equation}
\end{proof}

\subsection{Proving the Equivalence of the Bradley-Terry Model and the Regret Preference Model} \label{equivalence}
\begin{lemma}
Given a reward function $r(\mathrm{x}, \mathrm{y})$, assuming a relationship between token-wise rewards and the reward function represented by $r({x}, {y}) = \sum_{t=1}^T\gamma^{t-1}R([{x},y^{<t}], y^t)$, we can establish the equivalence between the Bradley-Terry model and the Regret Preference Model in the task of text generation alignment, i.e.,
\begin{equation}
    P_{\mathrm{BT}}({y}_1 \succ {y}_2 |{x})=\sigma\left(\sum_{t=1}^{T_1}\gamma^{t-1}A_{\pi}([{x},y_1^{<t}], y_1^{t}) - \sum_{t=1}^{T_2}\gamma^{t-1}A_{\pi}([{x},y_2^{<t}], y_2^{t})\right),
\end{equation}
where $\sigma(x)=1/(1+exp(-x))$ is the logistic sigmoid function.
\end{lemma}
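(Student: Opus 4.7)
The plan is to reduce the claim to a single telescoping identity on the advantage function. By definition of the Bradley-Terry model in Eq.~(4), dividing numerator and denominator by $\exp(r(x, y_1))$ gives $P_{\mathrm{BT}}(y_1 \succ y_2 \mid x) = \sigma(r(x, y_1) - r(x, y_2))$, so the statement is equivalent to showing
\[
r(x, y_1) - r(x, y_2) = \sum_{t=1}^{T_1}\gamma^{t-1}A_{\pi}([x, y_1^{<t}], y_1^{t}) - \sum_{t=1}^{T_2}\gamma^{t-1}A_{\pi}([x, y_2^{<t}], y_2^{t}).
\]

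The key lemma I would establish is the trajectory-wise identity
\[
\sum_{t=1}^T \gamma^{t-1} A_{\pi}([x, y^{<t}], y^t) \;=\; r(x, y) - V_{\pi}([x]),
\]
valid for any policy $\pi$ and any response $y$ of length $T$. To prove it, I would use the fact that token generation is a deterministic MDP, so the Bellman relation takes the crisp form $Q_{\pi}([x, y^{<t}], y^t) = R([x, y^{<t}], y^t) + \gamma V_{\pi}([x, y^{<t+1}])$. Substituting this into $A_{\pi} = Q_{\pi} - V_{\pi}$ and multiplying by $\gamma^{t-1}$ yields $\gamma^{t-1}A_{\pi}([x, y^{<t}], y^t) = \gamma^{t-1} R([x, y^{<t}], y^t) + \gamma^{t} V_{\pi}([x, y^{<t+1}]) - \gamma^{t-1} V_{\pi}([x, y^{<t}])$. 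Summing from $t = 1$ to $T$, the value-function terms telescope: the intermediate $V_{\pi}([x, y^{<t}])$ terms cancel, the $t = 1$ term contributes $-V_{\pi}([x])$, and the $t = T$ term contributes $\gamma^T V_{\pi}([x, y^{<T+1}])$, which vanishes at the terminal state. What remains is exactly $r(x, y) - V_{\pi}([x])$ by the assumed decomposition of $r$.

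Applying this identity to both $y_1$ and $y_2$, and then subtracting, the $V_{\pi}([x])$ baselines cancel because they depend only on the shared prompt $x$, yielding the required expression for $r(x, y_1) - r(x, y_2)$. Plugging this into the sigmoid form of Bradley-Terry finishes the proof.

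The main point requiring care is the endpoint bookkeeping in the telescoping sum: one must justify that $V_{\pi}$ evaluated at the post-terminal state $[x, y^{<T+1}]$ is zero (absorbing end-of-sequence), and that $V_{\pi}([x, y^{<1}]) = V_{\pi}([x])$ under the convention $y^{<1} = [\,]$ from Section~4.1. Once these conventions are stated explicitly, the argument is a purely algebraic rearrangement; no probabilistic estimates or inequalities are needed, which is why the equivalence is exact rather than approximate.
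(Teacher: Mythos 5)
Your proposal is correct and follows essentially the same route as the paper's proof: reduce Bradley--Terry to a sigmoid of the reward difference, telescope $\sum_t \gamma^{t-1}A_\pi$ into $r(x,y)-V_\pi([x])$ using the deterministic Bellman relation, kill the terminal value $V_\pi([x,y^{<T+1}])=0$, and cancel the shared baseline $V_\pi([x,y^{<1}])=V_\pi([x])$ between the two responses. The endpoint conventions you flag are exactly the ones the paper invokes, so no gap remains.
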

\begin{proof}
According to the Bradley-Terry model, we have
\begin{equation}
    \begin{aligned}
        P_{\mathrm{BT}}({y}_1\succ {y}_2 | {x})=\frac{\exp(r({x}, {y}_1))}{\exp(r({x}, {y}_1))+\exp(r({x}, {y}_2))},
        \label{app_BT_model}
    \end{aligned}
\end{equation}
where $r({x}, {y})$ represents the overall reward of the pair $({x}, {y})$.
% and it is related to token-wise reward as $r({x}, {y}) = \sum_{t=1}^{T} R([{x},y^{<t}], y^t)$. $R([{x},y^{<t}], y^t)$ denotes the individual step reward for choosing action $a_t=y^t$ at state $s_t=[{x},y^{<t}]$.\\

Based on assumption that $r({x}, {y}) = \sum_{t=1}^T\gamma^{t-1}R([{x},y^{<t}], y^t)$, we can get:
\begin{align}
    r({x}, {y}) &= \sum_{t=1}^{T}\gamma^{t-1}R([{x},y^{<t}], y^t)\\
    &=\sum_{t=1}^{T} \gamma^{t-1}(R([{x},y^{<t}], y^t) + \gamma V_{\pi}([{x},y^{<t+1}]) - \gamma V_{\pi}([{x},y^{<t+1}]))\\
    &=V_{\pi}([{x},y^{<1}]) + \sum_{t=1}^{T}\gamma^{t-1}\left( R([{x},y^{<t}], y^t) + \gamma V_{\pi}([{x},y^{<t+1}]) - V_{\pi}([{x},y^{<t}])\right) - \gamma^T V_{\pi}([{x},y^{<T+1}])\label{app_r}
\end{align}
Text generation is analogous to a deterministic contextual bandit, where the transition to the next state is certain given the current state and action, i.e., $p(s_{t+1}=[{x},y^{<t+1}]|s_{t}=[{x},y^{<t}], a_t=y^t)=1$, so we have:
\begin{align}
Q_{\pi}([{x},y^{<t}], y^t) &= R([{x},y^{<t}], y^t) + V_{\pi}([{x},y^{<t+1}])\label{app_Q}\\
A_{\pi}([{x},y^{<t}], y^t) &= Q_{\pi}([{x},y^{<t}], y^t) - V_{\pi}([{x},y^{<t}])
\label{app_A}
\end{align}

Next, note that  $y^T=\text{EOS}$ denotes the end of the text sequence. Therefore,
\begin{equation}
V_{\pi}([{x},y^{<T+1}])=\mathbb{E}_{\pi}\left[\sum_{k=0}^{\infty}\gamma^{k} R([{x},y^{<T+1+k}], y^{T+1+k})\bigg|s_{t}=[{x},y^{<T+1}]\right]=0\label{app_V}
\end{equation}

Substituting Eq.\ref{app_r} to Eq.\ref{app_V} into the Bradley-Terry model, we obtain

\begin{align}
    P_{\mathrm{BT}}({y}_1\succ {y}_2 | {x})&=\frac{\exp(r({x}, {y}_1))}{\exp(r({x}, {y}_1))+\exp(r({x}, {y}_2))}\nonumber\\
    % =&\frac{\exp(V_{\pi}([{x},y^{<1}]) + \sum_{t=1}^{T}\left( Q_{\pi}([{x},y^{<t}], y^t) - V_{\pi}([{x},y^{<t}])\right))}{\exp(V_{\pi}([{x},y^{<1}]) + \sum_{t=1}^{T}\left( Q_{\pi}([{x},y^{<t}], y^t) - V_{\pi}([{x},y^{<t}])\right))+\exp(V_{\pi}([{x},y^{<1}]) + \sum_{t=1}^{T}\left( Q_{\pi}([{x},y^{<t}], y^t) - V_{\pi}([{x},y^{<t}])\right))}\\
    =&\sigma\left(\left(V_{\pi}([{x},y_1^{<1}]) + \sum_{t=1}^{T_1}\left(\gamma^{t-1} A_{\pi}([{x},y_1^{<t}], y^t)\right)\right)-\left(V_{\pi}([{x},y_2^{<1}]) + \sum_{t=1}^{T_2}\left(\gamma^{t-1} A_{\pi}([{x},y_2^{<t}], y_2^t)\right)\right)\right)
\end{align}

Additionally, note that $y^{<1}=[\ ]$, so we can get
$$
V_{\pi}([{x},y_1^{<1}]) = V_{\pi}([{x},[\ ]]) = V_{\pi}([{x},y_2^{<1}])
$$
Therefore, 
\begin{align}
    P_{\mathrm{BT}}({y}_1\succ {y}_2 | {x}) &=\sigma\left(\left(V_{\pi}([{x},y_1^{<1}]) + \sum_{t=1}^{T_1}\left( \gamma^{t-1}A_{\pi}([{x},y_1^{<t}], y_1^t)\right)\right)-\left(V_{\pi}([{x},y_2^{<1}]) + \sum_{t=1}^{T_2}\left( \gamma^{t-1}A_{\pi}([{x},y_2^{<t}], y_2^t)\right)\right)\right)\\
    &=\sigma\left(\sum_{t=1}^{T_1}\left(\gamma^{t-1} A_{\pi}([{x},y_1^{<t}], y_1^t)\right)- \sum_{t=1}^{T_2}\left(\gamma^{t-1} A_{\pi}([{x},y_2^{<t}], y_2^t)\right)\right)
\end{align}
\end{proof}

\subsection{Deriving the \methodabb{} Objective Under the Bradley-Terry Model}\label{A_5}
\begin{theorem}
    In the KL-constrainted advantage function maximization problem corresponding to Eq.\ref{rpo_obj}, the Bradley-Terry model express the human preference probability in terms of the optimal policy $\pi_{\theta}^*$ and reference policy $\pi_{\mathrm{ref}}$:
    \begin{equation}
    \small
        P_{\mathrm{BT}}^*({y}_1 \succ {y}_2 |{x})=\sigma(u^*({x}, {y}_1, {y}_2) - \delta^*({x}, {y}_1, {y}_2)),
    \end{equation}
    where, $u({x}, {y}_1, {y}_2)$ refers to the difference in rewards implicitly defined by the language model $\pi_{\theta}$ and the reference model $\pi_{\mathrm{ref}}$ \cite{rafailov2023direct}, represented as
    \begin{equation}
    \small
    u({x}, {y}_1, {y}_2)=\beta\log\frac{\pi_{\theta}({y}_1\mid {x})}{\pi_{\mathrm{ref}}({y}_1\mid {x})}-\beta\log\frac{\pi_{\theta}({y}_2\mid {x})}{\pi_{\mathrm{ref}}({y}_2\mid {x})},
    \end{equation}
    and $\delta({x}, {y}_1, {y}_2)$ refers to the difference in sequential forward KL divergence between two pairs $({x}, {y}_1)$ and $({x}, {y}_2)$, weighted by $\beta$, expressed as
    \begin{equation}
    \delta({x}, {y}_1, {y}_2) =\beta D_{\mathrm{SeqKL}}\left({x},{y}_2;\pi_{\mathrm{ref}}\| \pi_{\theta}\right) - \beta D_{\mathrm{SeqKL}}\left({x},{y}_1;\pi_{\mathrm{ref}}\| \pi_{\theta}\right).
    \end{equation}
\end{theorem}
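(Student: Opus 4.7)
My plan is to combine Lemma~\ref{lemma1}, which expresses $P_{\mathrm{BT}}$ as a sigmoid of a difference of summed advantages, with the closed form for $Q_{\pi_{\mathrm{ref}}}$ in Eq.~\ref{eq4}. The anticipated obstacle is that Eq.~\ref{eq4} carries a partition function $\beta\log Z([x, y^{<t}];\beta)$ that depends on the entire prefix $[x, y^{<t}]$; unlike in the sentence-level DPO derivation, the partition factors attached to $y_1$ and $y_2$ do not line up token-wise and will not cancel merely by taking a difference of $Q$'s. I would route the cancellation through $V_{\pi_{\mathrm{ref}}}$ instead, using $A = Q - V$ token by token, so that the same $\beta\log Z$ term appears additively in both $Q$ and $V$ and drops out of each advantage individually.

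The first step is to specialize Lemma~\ref{lemma1} to $\pi = \pi_{\mathrm{ref}}$ and $\gamma = 1$, giving
\begin{equation*}
P_{\mathrm{BT}}^*(y_1 \succ y_2 \mid x) = \sigma\!\left(\sum_{t=1}^{T_1} A_{\pi_{\mathrm{ref}}}([x, y_1^{<t}], y_1^t) - \sum_{t=1}^{T_2} A_{\pi_{\mathrm{ref}}}([x, y_2^{<t}], y_2^t)\right).
\end{equation*}
Next, I would expand each advantage via $A_{\pi_{\mathrm{ref}}} = Q_{\pi_{\mathrm{ref}}} - V_{\pi_{\mathrm{ref}}}$, substituting Eq.~\ref{eq4} into the $Q$-term directly and into $V_{\pi_{\mathrm{ref}}}([x, y^{<t}]) = \mathbb{E}_{z \sim \pi_{\mathrm{ref}}(\cdot \mid [x, y^{<t}])}[Q_{\pi_{\mathrm{ref}}}([x, y^{<t}], z)]$ for the $V$-term. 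The expectation under $\pi_{\mathrm{ref}}$ of $\beta \log(\pi_\theta^* / \pi_{\mathrm{ref}})$ collapses to $-\beta D_{\mathrm{KL}}(\pi_{\mathrm{ref}}(\cdot \mid [x, y^{<t}]) \,\|\, \pi_\theta^*(\cdot \mid [x, y^{<t}]))$, while $\beta \log Z$ is independent of $z$ and pulls out of the expectation unchanged. Subtracting $V$ from $Q$ should kill the partition term and leave
\begin{equation*}
A_{\pi_{\mathrm{ref}}}([x, y^{<t}], y^t) = \beta \log \frac{\pi_\theta^*(y^t \mid [x, y^{<t}])}{\pi_{\mathrm{ref}}(y^t \mid [x, y^{<t}])} + \beta D_{\mathrm{KL}}\bigl(\pi_{\mathrm{ref}}(\cdot \mid [x, y^{<t}]) \,\|\, \pi_\theta^*(\cdot \mid [x, y^{<t}])\bigr).
\end{equation*}

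The final step is to sum over $t$ and plug back into the sigmoid. By the auto-regressive factorization of $\pi_\theta^*(y \mid x)$ and $\pi_{\mathrm{ref}}(y \mid x)$, the per-token log-ratios telescope to $\beta \log(\pi_\theta^*(y \mid x)/\pi_{\mathrm{ref}}(y \mid x))$, and by Definition~\ref{def1} the per-token forward KLs sum to $\beta D_{\mathrm{SeqKL}}(x, y; \pi_{\mathrm{ref}} \| \pi_\theta^*)$. Applying this separately to $(y_1, T_1)$ and $(y_2, T_2)$, the difference of log-ratios will reproduce $u^*(x, y_1, y_2)$ and the difference of sequential KLs will reproduce $-\delta^*(x, y_1, y_2)$ (with the sign matching the convention in Eq.~\ref{delta_function}, which places $y_2$ first), yielding $P_{\mathrm{BT}}^*(y_1 \succ y_2 \mid x) = \sigma(u^*(x, y_1, y_2) - \delta^*(x, y_1, y_2))$. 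The only delicate bookkeeping I expect is verifying the sign so that $-\delta^*$ and not $+\delta^*$ appears at the end; the rest is routine substitution together with the chain rule of conditional probability.
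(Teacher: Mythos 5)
Your proposal is correct and follows essentially the same route as the paper's proof: both invoke Lemma~\ref{lemma1} with $\pi=\pi_{\mathrm{ref}}$, expand $A_{\pi_{\mathrm{ref}}}=Q_{\pi_{\mathrm{ref}}}-\mathbb{E}_{z\sim\pi_{\mathrm{ref}}}[Q_{\pi_{\mathrm{ref}}}]$ so that the prefix-dependent $\beta\log Z$ cancels inside each advantage, identify the residual expectation as $-\beta D_{\mathrm{KL}}(\pi_{\mathrm{ref}}\|\pi_\theta^*)$, and sum over tokens via the chain rule and Definition~\ref{def1} to obtain $\sigma(u^*-\delta^*)$. Your sign bookkeeping for $\delta^*$ also matches the paper's convention.
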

\begin{proof}
    According to the \cref{lemma4}, we have
    \begin{equation}
\begin{aligned}
    \pi_{\theta}^*(z|[{x},y^{<t}])
    =\frac{\pi_{\mathrm{ref}}(z|[{x},y^{<t}])\exp\left(\frac{1}{\beta}Q_{\pi_{\mathrm{ref}}}([{x},y^{<t}],z)\right)}{Z([{x},y^{<t}];\beta)}
\end{aligned}\label{eq_26}
\end{equation}
where $Z([{x},y^{<t}];\beta) = \mathbb{E}_{z\sim \pi_{\mathrm{ref}}(\cdot|[{x},y^{<t}])}e^{\frac{1}{\beta}Q_{\pi_{\mathrm{ref}}}([{x},y^{<t}],z)}$
is the partition function.

Rearrange Eq.\ref{eq_26}, we obtain
\begin{equation}
    \begin{aligned}
        Q_{\pi_{\mathrm{ref}}}([{x},y^{<t}],z)
        = \beta \log\frac{\pi_{\theta}^*(z|[{x}, y^{<t}])}{\pi_{\mathrm{ref}}(z|[{x}, y^{<t}])} + \beta \log Z([{x}, y^{<t}];\beta)
    \end{aligned}\label{eq_27}
\end{equation}

From \cref{lemma1}, We can get
\begin{equation}
    P_{\mathrm{BT}}({y}_1\succ {y}_2 | {x}) =\sigma\left(\sum_{t=1}^{T_1}\left(\gamma^{t-1} A_{\pi}([{x},y_1^{<t}], y_1^t)\right)- \sum_{t=1}^{T_2}\left(\gamma^{t-1} A_{\pi}([{x},y_2^{<t}], y_2^t)\right)\right)\label{eq_28}
\end{equation}

By leveraging Eq.\ref{eq_27}, we can derive
\begin{align}
    &\sum_{t=1}^{T}\gamma^{t-1}A_{\pi_{\mathrm{ref}}}([{x},y^{<t}], y^{t})\nonumber\\ 
        =&
    \sum_{t=1}^{T}\gamma^{t-1}\Big(Q_{\pi_{\mathrm{ref}}}([{x},y^{<t}],y^{t})-V_{\pi_{\mathrm{ref}}}([{x},y^{<t}])\Big)\\
    =&\sum_{t=1}^{T}\gamma^{t-1}\Big(Q_{\pi_{\mathrm{ref}}}([{x},y^{<t}],y^{t})-\mathbb{E}_{z\sim \pi_{\mathrm{ref}}}\left[Q_{\pi_{\mathrm{ref}}}([{x},y^{<t}], z)\right]\Big)\\
    =&\sum_{t=1}^{T}\gamma^{t-1}\left(\beta \log\frac{\pi_{\theta}^*(y^{t}|[{x}, y^{<t}])}{\pi_{\mathrm{ref}}(y^{t}|  ·[{x}, y^{<t}])} + \beta \log Z([{x}, y^{<t}];\beta)-\mathbb{E}_{z\sim \pi_{\mathrm{ref}}}\left[\beta \log\frac{\pi_{\theta}^*(z|[{x}, y^{<t}])}{\pi_{\mathrm{ref}}(z|[{x}, y^{<t}])} + \beta \log Z([{x}, y^{<t}];\beta) \right]\right)
\end{align}

Note that
\begin{equation}
    \mathbb{E}_{z\sim \pi_{\mathrm{ref}}}\left[\beta \log Z([{x}, y^{<t}];\beta)\right] = \beta \log Z([{x}, y^{<t}];\beta)
\end{equation}
Therefore,
\begin{align}
    \sum_{t=1}^{T}\gamma^{t-1}A_{\pi_{\mathrm{ref}}}([{x},y^{<t}], y^{t})&=\beta\sum_{t=1}^{T}\gamma^{t-1}\left( \log\frac{\pi_{\theta}^*(y^{t}|[{x}, y^{<t}])}{\pi_{\mathrm{ref}}(y^{t}|[{x}, y^{<t}])}-\mathbb{E}_{z\sim \pi_{\mathrm{ref}}}\left[\log\frac{\pi_{\theta}^*(z|[{x}, y^{<t}])}{\pi_{\mathrm{ref}}(z|[{x}, y^{<t}])}\right]\right)\\
    &=\beta\sum_{t=1}^{T}\gamma^{t-1}\left( \log\frac{\pi_{\theta}^*(y^{t}|[{x}, y^{<t}])}{\pi_{\mathrm{ref}}(y^{t}|[{x}, y^{<t}])}+D_{\mathrm{KL}}\left(\pi_{\mathrm{ref}}(\cdot|[{x},y^{<t}]) \|\pi_{\theta}^*(\cdot|[{x},y^{<t}])\right)\right)\\
&=\beta\sum_{t=1}^{T} \gamma^{t-1}\log\frac{\pi_{\theta}^*(y^{t}|[{x}, y^{<t}])}{\pi_{\mathrm{ref}}(y^{t}|[{x}, y^{<t}])}+\beta\sum_{t=1}^{T}\gamma^{t-1}D_{\mathrm{KL}}\left(\pi_{\mathrm{ref}}(\cdot|[{x},y^{<t}]) \|\pi_{\theta}^*(\cdot|[{x},y^{<t}])\right)
\end{align}
When substituting $\gamma=1$ into the expression, we obtain a more concise form:
\begin{align}
    \sum_{t=1}^{T}A_{\pi_{\mathrm{ref}}}([{x},y^{<t}], y^{t})&=\beta\sum_{t=1}^{T} \log\frac{\pi_{\theta}^*(y^{t}|[{x}, y^{<t}])}{\pi_{\mathrm{ref}}(y^{t}|[{x}, y^{<t}])}+\beta\sum_{t=1}^{T}D_{\mathrm{KL}}\left(\pi_{\mathrm{ref}}(\cdot|[{x},y^{<t}]) \|\pi_{\theta}^*(\cdot|[{x},y^{<t}])\right)\\
&=\beta\left(\log\frac{\pi_{\theta}^*({y}|{x})}{\pi_{\mathrm{ref}}({y}|{x})}+D_{\mathrm{SeqKL}}\left({x},{y};\pi_{\mathrm{ref}}\| \pi_{\theta}^*\right)\right)\label{eq_36}
\end{align}

We let
\begin{align}
    u({x}, {y}_1, {y}_2)&=\beta\log\frac{\pi_{\theta}({y}_1\mid {x})}{\pi_{\mathrm{ref}}({y}_1\mid {x})}-\beta\log\frac{\pi_{\theta}({y}_2\mid {x})}{\pi_{\mathrm{ref}}({y}_2\mid {x})},\label{app_u}\\
    \delta({x}, {y}_1, {y}_2) &=\beta D_{\mathrm{SeqKL}}\left({x},{y}_2;\pi_{\mathrm{ref}}\| \pi_{\theta}\right) - \beta D_{\mathrm{SeqKL}}\left({x},{y}_1;\pi_{\mathrm{ref}}\| \pi_{\theta}\right).\label{app_delta}
\end{align}
Substituting Eq.\ref{eq_36} to Eq.\ref{app_delta} into Eq.\ref{eq_28}, we arrive at:
\begin{equation}
        P_{\mathrm{BT}}^*({y}_1 \succ {y}_2 |{x})=\sigma(u^*({x}, {y}_1, {y}_2) - \delta^*({x}, {y}_1, {y}_2)).
    \end{equation}

\end{proof}

\section{\methodabb{} Implementation Details and Hyperparameters}\label{appendix_B}
% To offer a clearer understanding, we have provided the pseudocode and PyTorch code implementations of \methodabb{}.

% The pseudocode is as follows:

% \clearpage
PyTorch code for the \methodabb{} loss is provided below:
\begin{python}
import torch
import torch.nn.functional as F

def tdpo_loss(pi_logits, ref_logits, yw_idxs, yl_idxs, labels, beta, alpha, if_tdpo2):
    """
    pi_logits: policy logits. Shape: (batch_size, sequence_length, vocab_size), 
    ref_logits: reference logits. Shape: (batch_size, sequence_length, vocab_size)
    yw_idxs: preferred completion indices in [0,B-1], shape (T,)
    yl_idxs: dispreferred completion indices in [0,B-1], shape (T,)
    labels: labels for which to compute the log probabilities, Shape: (batch_size, sequence_length)
    beta: temperature controlling strength of KL penalty
    Each pair of (yw_idxs[i], yl_idxs[i]) represents the indices of a single preference pair.
    alpha: The weight factor adjusts the influence weight of kl divergence at each token
    if_tdpo2: Use method TDPO2 by default; if False, switch to TDPO1
    """

    pi_vocab_logps = pi_logits.log_softmax(-1)

    ref_vocab_ps = ref_logits.softmax(-1)
    ref_vocab_logps = ref_vocab_ps.log()
    
    pi_per_token_logps = torch.gather(pi_vocab_logps, dim=2, index=labels.unsqueeze(2)).squeeze(2)
    ref_per_token_logps = torch.gather(ref_vocab_logps, dim=2, index=labels.unsqueeze(2)).squeeze(2)

    per_position_rewards = pi_per_token_logps - ref_per_token_logps
    yw_rewards, yl_rewards = per_position_rewards[yw_idxs], per_position_rewards[yl_idxs]

    rewards = yw_rewards - yl_rewards
    
    # losses = -F.logsigmoid(beta * rewards)    # DPO loss function

    # =============================Difference with DPO=================================
    per_position_kl = (ref_vocab_ps * (ref_vocab_logps - pi_vocab_logps)).sum(-1)
    yw_kl, yl_kl = per_position_kl[yw_idxs], per_position_kl[yl_idxs]

    if not if_tdpo2:
        values = yw_rewards - yl_rewards - (yl_kl - yw_kl)
    else:
        values = yw_rewards - yl_rewards - alpha * (yl_kl - yw_kl.detach())    
        
    losses = -F.logsigmoid(beta * values)
    # =================================================================================

    return losses
\end{python}

Unless specified otherwise, we use a \(\alpha=0.5, \beta=0.1\), batch size of 64, and the RMSprop optimizer with a learning rate of 5e-6. We linearly warm up the learning rate from 0 to 5e-6 over 150 steps.

\clearpage
\section{Additional Experimental Results}\label{appendix_C}
\begin{figure*}[ht]
\vskip 0.2in
\centering
% \subfigure[IMDb Sentiment Generation\label{IMDb:subfig1}]{\includegraphics[width=0.4\textwidth]{figs/IMDb_sentiment.pdf}}
% \subfigure[SeqKL on chosen responses\label{IMDb:subfig2}]{\includegraphics[width=0.4\textwidth]{figs/IMDb_kl_chosen.pdf}}
% \subfigure[SeqKL on rejected responses\label{IMDb:subfig3}]{\includegraphics[width=0.4\textwidth]{figs/IMDb_kl_rejected.pdf}}
% \subfigure[SeqKL margin\label{IMDb:subfig4}]{\includegraphics[width=0.4\textwidth]{}}
\subfigure[\label{IMDb_fkl:subfig1}]{\includegraphics[width=0.38\textwidth]{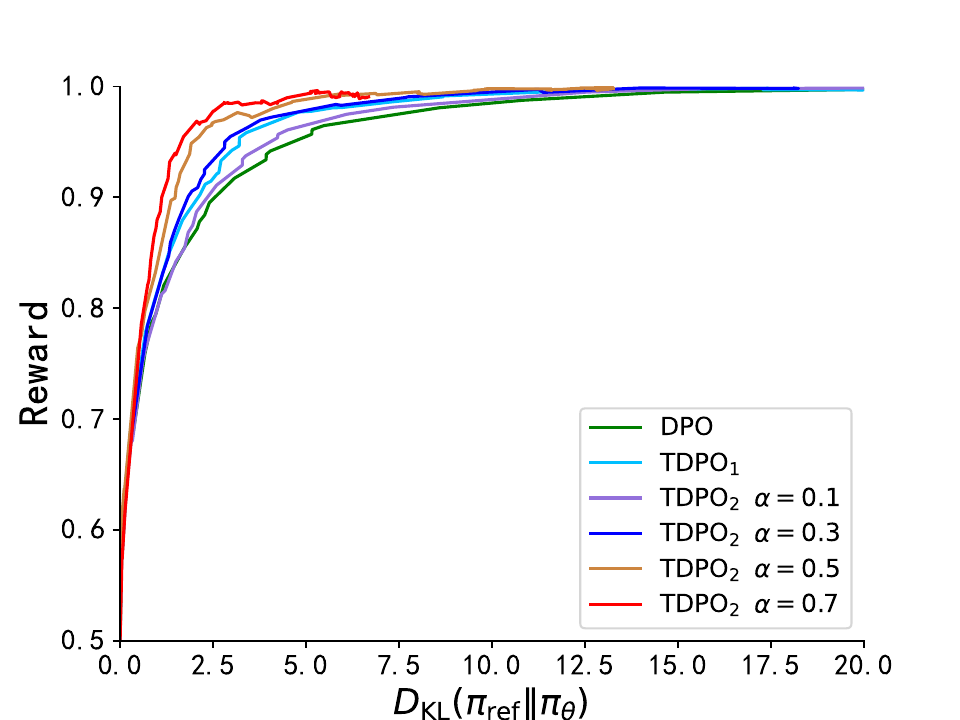}}
\subfigure[\label{IMDb_fkl:subfig2}]{\includegraphics[width=0.38\textwidth]{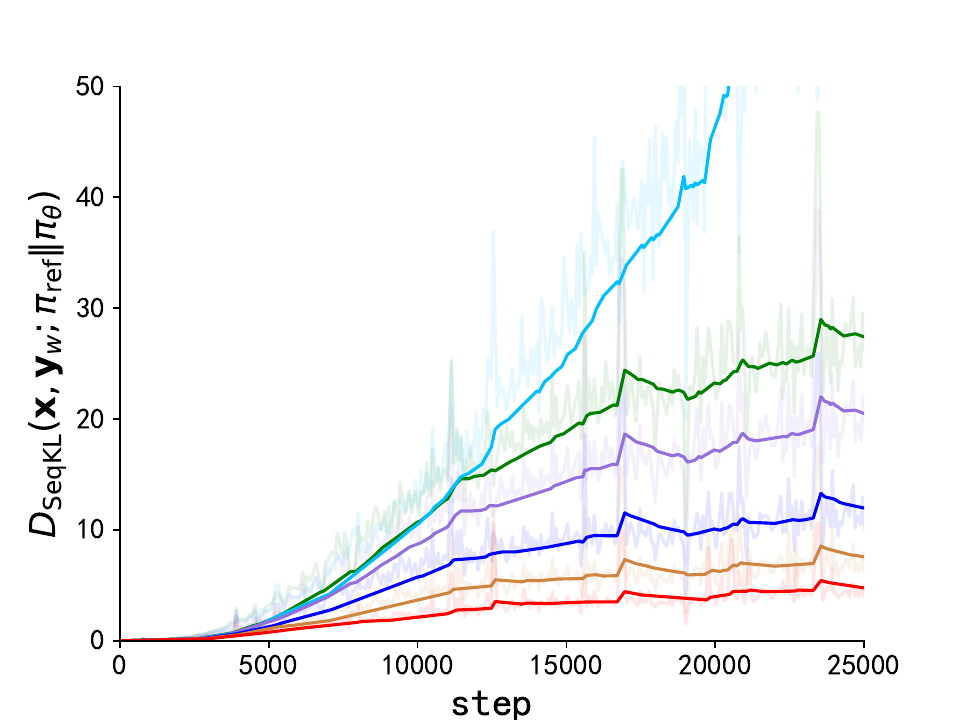}}
% \hspace{5pt}
\subfigure[\label{IMDb_fkl:subfig3}]{\includegraphics[width=0.38\textwidth]{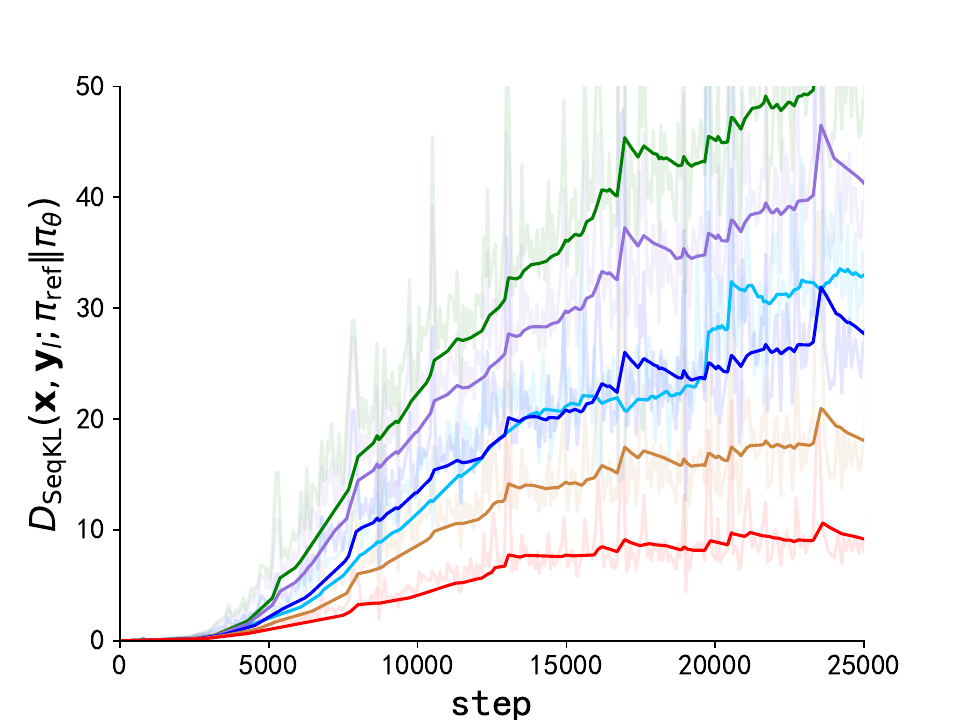}}
% \hspace{-5pt}
\subfigure[\label{IMDb_fkl:subfig4}]{\includegraphics[width=0.38\textwidth]{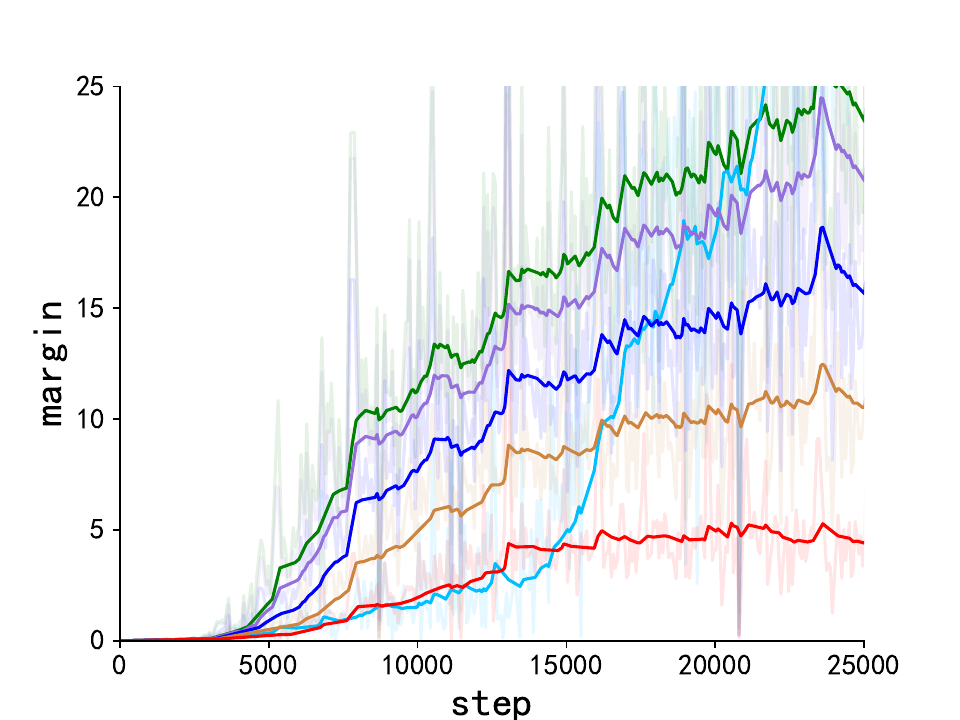}}
  \caption{The experiment on IMDb dataset. \cref{IMDb_fkl:subfig1} represents the frontier of expected reward and forward KL divergence with respect to the reference model.  \cref{IMDb_fkl:subfig2} and \cref{IMDb_fkl:subfig3} present the progression of sequential forward KL divergence on the preferred and dispreferred responses subset over training steps respectively. \cref{IMDb_fkl:subfig4} illustrates the difference between the sequential forward KL divergence on the dispreferred responses subset and that on the preferred responses subset throughout the training process.}
  \label{fig_fkl}
\vskip -0.2in
\end{figure*}

\end{document}